\title{
Momentum Stiefel Optimizer, with Applications to Suitably-Orthogonal Attention, and Optimal Transport
}
\author{%
  Lingkai Kong,   Yuqing Wang,   Molei Tao\\
  Georgia Institute of Technology  \qquad
  \texttt{\{lkong75,ywang3398,mtao\}@gatech.edu} 
}
\newcommand{\expm}{\text{expm}}
\newcommand{\tr}{\text{Tr}}
\newcommand{\derative}{\frac{\partial f}{\partial X}}
\setlist{noitemsep,topsep=0pt,parsep=0pt,partopsep=0.3pt, leftmargin=*}
\newcommand{\cmark}{\text{\ding{51}}}%
\newcommand{\xmark}{\text{\ding{55}}}%
\newcommand{\wang}[1]{[{\color{orange} Wang: #1}]}
\theoremstyle{definition}
\newtheorem{lemma}{Lemma}
\newtheorem{definition}{Definition}
\newtheorem{theorem}{Theorem}
\newtheorem{proposition}{Proposition}
\newtheorem{remark}{Remark}
\newcolumntype{S}{>{\centering\arraybackslash}m{1.95cm}}
\newcolumntype{C}{>{\centering\arraybackslash}m{2.7cm}}
\newcolumntype{M}{>{\centering\arraybackslash}m{8cm}}
\begin{document}

\let\thempfootnote\thefootnote

\maketitle

\begin{abstract}

    The problem of optimization on Stiefel manifold, i.e., minimizing functions of (not necessarily square) matrices that satisfy orthogonality constraints, has been extensively studied. Yet, a new approach is proposed based on, for the first time, an interplay between thoughtfully designed continuous and discrete dynamics. It leads to a gradient-based optimizer with intrinsically added momentum. This method exactly preserves the manifold structure but does not require additional operation to keep momentum in the changing (co)tangent space, and thus has low computational cost and pleasant accuracy. Its generalization to adaptive learning rates is also demonstrated. Notable performances are observed in practical tasks. For instance, we found that placing orthogonal constraints on attention heads of trained-from-scratch Vision Transformer \citep{dosovitskiy2020image} could markedly improve its performance, when our optimizer is used, and it is better that each head is made orthogonal within itself but not necessarily to other heads. This optimizer also makes the useful notion of Projection Robust Wasserstein Distance \citep{paty2019subspace,lin2020projection} for high-dim. optimal transport even more effective.
\end{abstract}

\vspace{-0.1in}

    ~~ Code: \url{https://github.com/konglk1203/VariationalStiefelOptimizer}
    
\vspace{-0.1in}
\section{Introduction}
\vspace{-0.1in}

Matrices that satisfy orthogonal constraints play important roles in various areas including machine learning. A  range of studies showed this both theoretically \citep{saxe2013exact, xiao2018dynamical} and experimentally -- for example, orthogonality can boost the performances of many architectures\footnote{It helped computer vision applications prior to the deep learning era as well \citep[e.g., ][]{liu2003optimal}.},  
e.g., MLP \citep{cisse2017parseval}, CNN and ResNet \citep{li2020efficient, wang2020orthogonal}, RNN \citep{arjovsky2016unitary, wisdom2016full, helfrich2018orthogonal},  Transformer \citep{zhang2021orthogonality}.
The training of such models amounts to optimization under orthogonal constraints, whose applications to machine learning, however, are not restricted to just improving deep learning models. For example, such optimization helps construct optimal low-dim. projection of high-dim. data, which can be used for, e.g., robust and efficient approximation of Wasserstein distance \citep{paty2019subspace,lin2020projection}. Therefore, this article will focus on this non-Euclidean / Riemannian optimization problem.

\vspace{-0.05in}
In fact, matrices that contain orthogonal columns constitute a Riemannian manifold known as the Stiefel manifold. Given integers $n\geq m>0$, it is
defined as $\mathsf{St}(n,m):=\{X\in\mathbb{R}^{n\times m}: X^\top X=I_{m\times m}\}$ (see Apdx. \ref{app_Stnn} for the $n=m$ special case which is almost $\mathsf{SO}(n)$). 
Then, given a $\mathcal{C}^1$ function $f:\mathsf{St}(n,m)\rightarrow \mathbb{R}$, we consider, under gradient oracle, the smooth optimization problem
\vspace{-0.06in}
\begin{equation}
    \min_{X\in \mathsf{St}(n,m)} f(X), \qquad\text{which is equivalent to}\quad \min_{X\in\mathbb{R}^{n\times m}, X^\top X=I_{m\times m}} f(X).
    \label{eqn_problem}
\end{equation}
\vspace{-0.22in}

Due to the importance of this problem, a wide range of approaches have been proposed. Methods based on regularizers, which approximate the constraints (and hence the manifold structure), for example, have been popular in the machine learning literature \citep{cisse2017parseval, bansal2018can, wang2020orthogonal, zhang2021orthogonality}. Meanwhile, efforts have been continuously made to construct algorithms that truly preserve the constraints (i.e. the manifold structure), although challenges such as computational scalability and how to add momentum still remain; see the 2nd next paragraph for how our method addresses them, and more discussions on existing milestones in Sec. \ref{sec_related_work}.

\vspace{-0.03in}

Our strategy toward a good Stiefel optimizer is the following: first, formulate a variational principle and use that to construct an optimizing dynamics in continuous time, which is described by a system of ODEs corresponding to damped mechanical systems on a constrained manifold; then, design a delicate time discretization of these ODEs, which yields optimization algorithms that precisely preserve the constraints and mimic the continuous dynamics.

\vspace{-0.05in}

Our optimizer has several pleasant properties: 1) It is exactly preserving the manifold structure, not only of the Stiefel manifold, but in fact of its tangent bundle. In other words, throughout the course of optimization, the position variable remains exactly on the Stiefel manifold, and the momentum variable remains exactly in the (co)tangent space. 2) Typically, in order to maintain the manifold structure, some kind of projection/retraction/exponential-map operation is needed, and since we have both position and momentum, such operation is needed for both variables (i.e. to maintain the cotangent bundle structure). However, our carefully designed ODE and its discretization make the structure preservation of momentum automatic, meaning that no extra operation (projection, retraction, parallel transport, etc.) is needed for the momentum variable. This not only leads to improved computational efficiency, but also serves an indirect evidence of having a reduced overall (i.e. both position and momentum) local error. 
3) We used a quadratic-convergent iterative solver for our specific position retraction operation, which makes it fast.
4) Due to 2)+3), our per iteration computational complexity, $\mathcal{O}(nm^2)$, has a small constant factor (see Sec. \ref{sec_complexity} for details). 
5) Our discretization is also numerically stable so that it well preserves the structure even under low machine precision and numerous iterations, which are beneficial in machine learning contexts.
6) Because our algorithm is derived from a variational framework that unify both Euclidean and Stiefel variables, the same hyperparameters can be used for both these parameters; see Sec.\ref{sec_experiment} and note this difference from previous milestones (e.g., \cite{li2020efficient}) significantly reduces tuning efforts. 
7) Our algorithm works for a range of Riemannian metrics, allowing extra flexibility in choosing suitable geometry to optimize the performance for a specific problem.

\vspace{-0.05in}

Selected (due to space) experimental tests of our optimizer are: (1) We consider the simple problem of leading eigenvalues, which is yet practically important in data sciences. It systematically investigates algorithmic performances under different parameters. (2) We show the elegant idea of approximating optimal transport distance in high-dim. via a good low-dim. projection \citep{paty2019subspace,lin2020projection} can be made even more efficacious by our optimizer. (3) We note that Vision Transformer (ViT) can be further improved by imposing attention heads to be orthogonal; more precisely: 

\vspace{-0.03in}

Consider training ViT from scratch. We discover that 1) requiring each head to be orthogonal in itself improves both training and testing accuracies the most. An important recent work by \citet{zhang2021orthogonality} applied orthogonality to transformers and demonstrated improved performance in NLP tasks. It concatenates each of the $W_i^Q$, $W_i^K$, $W_i^V$ matrices in attention across all heads, and applies orthogonal constraint to each of the three, via regularizer. This makes each head (approximately) orthogonal, not only within itself, but also to other heads. Orthogonal constraint is also applied, via regularizer, to each weight matrix of feed-forward layers in their case. With our Stiefel optimizer 
which are not restricted to square matrices, we can now make each head exactly and only orthogonal within itself, which leads to further improvements at least in CV tasks. 
Meanwhile, 2) having orthogonality both in and across heads is found less effective than 1), but it is still better than requiring no orthogonality (i.e. vanilla ViT). No orthogonality on feed-forward layers was used in either 1) or 2). 
In addition, 3) to achieve these improvements, our Stiefel optimizer needs to be used; methods that do not have momentum or not exactly preserve structure (e.g., regularizer-based) are seen not fully exploiting the benefit of orthogonality. 

\vspace{-0.15in}
\subsection{More on related work}
\label{sec_related_work}
\vspace{-0.1in}

\paragraph{Orthogonality in deep learning}
\underline{Initialization:} 
Orthogonal initialization is both theoretically and experimentally shown good for deep neural networks \citep{saxe2013exact}. 
\underline{CNN:} \cite{cisse2017parseval} used regularizer to make weight matrices orthogonal when training MLP and CNN, and showed both improved accuracy and robustness. \cite{rodriguez2016regularizing} showed that posing orthogonality to CNN reduces overfitting. \cite{bansal2018can} experimentally compared several orthogonal regularizers on ResNet.
\underline{RNN:} \citet{arjovsky2016unitary, wisdom2016full, helfrich2018orthogonal, vorontsov2017orthogonality, lezcano2019cheap} showed orthogonalilty helps plain RNN achieve long term memory and even out-perform advanced models such as LSTM.
\underline{Transformer:} Despite of its success, transformer is actually a recent model; to the best of our knowledge, the community just started applying orthogonal constraint to Transformer for NLP tasks \citep{zhang2021orthogonality}. Our work is the first to apply it to Vision Transformer \citep{dosovitskiy2020image}.

\vspace{-0.15in}
\paragraph{Optimization algorithms}
Manifold optimization is a profound field. 
General approaches for 1st-order (i.e. gradient based) optimization on (Riemannian) manifold typically involve retraction (see Apdx.\ref{app:related_work}) which is not present in Euclidean cases. Thus we categorize 1st-order Stiefel optimizers into 2 types: O1) retraction-based, and O2) non-retraction-based.

\vspace{-0.02in}

\underline{O1) Retraction-based.} Given a point and a tangent vector, 
exponential map gives how the point moves along the tangent vector on the manifold. In practice, an approximation of the exponential map, known as a retraction, is often used for reducing the computational cost. The exponential map and various retractions on Stiefel manifold are well-studied. For Stiefel, the exponential map can be computed with complexity $\mathcal{O}(nm^2)$ \citep{edelman1998geometry}, whose constant prefactor is however large as the map is essentially matrix exponentiation. 
A well-known retraction is Cayley map (Eq. \eqref{eqn_cayley}), and \cite{wen2013feasible} proposed a gradient descent method\footnote{Their setting was deterministic, but an extension to use stochastic gradient descents is straightforward.} with smartly lowered computational cost of Cayley map (will be referred to as \texttt{Momentumless Stiefel (S)GD}). SVD can also help construct a retraction; \citet{li2019orthogonal} did so by first following the gradient out of the manifold and then projecting back via modifying the singular values, which is interesting but expensive.
Meanwhile, a remaining challenge is to add momentum, which is nontrivial in a manifold setting but helpful for improved speed of convergence \citep{zhang2018estimate,ahn2020nesterov,alimisis2021momentum}. In this case, the geometry that needs to be maintained is that of the augmented state-space, known as the tangent bundle; i.e., position has to be in Stiefel manifold and momentum in its tangent space. One could follow a general Riemannian retraction idea (e.g., \cite{absil2012projection}) and use retraction that projects both position and momentum to the tangent bundle to ensure manifold preservation. This was done for example in \cite{li2020efficient} (will be referred to as \texttt{Projected Stiefel SGD/Adam}). Another cool technique to maintain the structure of momentum is parallel transport (e.g., \cite{becigneul2018riemannian}), which is however computationally expensive and possibly inexact \citep{edelman1998geometry}. Our work could be viewed as a retraction-based approach, because our position variable is projected back to $\mathsf{St}$ at each step; however, it doesn't require any retraction or projection on the momentum variable --- thanks to our variational approach and delicate discretization, position and momentum intrinsically move together, and once the position $X$ is placed on $\mathsf{St}$, the momentum $Q$ automatically finds itself in $T_X \mathsf{St}$. This leads to better computational efficiency, as well as improved accuracy because variables stay closer to the tangent bundle.

\vspace{-0.02in}

\underline{O2) Non-retraction-based.} In some sense one can always call an exact manifold preserving method retraction-based, but approximate structure preservation is also a possibility. Regularizers for instance could be used to approach $X^\top X=I$. Between exact and approximate structure preserving, which one works better is theoretically still an open question, and task-dependent but insightful empirical results exist. For example, regularizer-based methods usually need more hyperparameter tuning, and the final result is likely sensitive to the regularization strength \citep{wang2020orthogonal}. \citet{lai2014splitting} also discussed their slow convergence and possible improvements. Regularizer-based methods typically have low computational costs, but due to trajectory not exactly on the manifold, they may converge to (the neighborhood of) a local min. different from the one an exact algorithm converges to for multimodal problems. Similar observation was made for non-regularizer-based approximate method too; e.g., \citet{vorontsov2017orthogonality} suggested that at least in certain cases `hard' constraint is better than `soft'. The aforementioned \citet{li2020efficient} is actually inexact too, because in their practical implementation the retraction was inexactly performed for the sake of speed. Meanwhile, we also note an interesting, non-regularizer-based recent result \citep{ablin2022fast} that converges to the manifold despite of being approximate prior to convergence. 

\vspace{-0.02in}
\underline{Additional.} We also mention the existence of useful algorithms for electronic structure calculations that can be viewed as Stiefel optimizers (e.g.,~\citet{zhang2014gradient, dai2017conjugate, gao2018new, gao2019parallelizable, hu2019structured, dai2020gradient}). 
They did not use momentum. 
Another inexact-manifold-preserving but interesting result is \cite{bu2022feedback}. The problem can also be approached via a constrained optimization formulation (e.g., \citep{leimkuhler2021better}). In addition, there are other interesting optimizers also based on discretizing damped mechanical systems but not via the splitting approach used here, e.g., \cite{lee2021variational,duruisseaux2021variational}; however they are mostly implicit and not always scalable to deep learning applications. 
Finally, it is possible to represent a matrix by other matrices and optimize those matrices instead. A special case of $\mathsf{SO}(n)$ was considered by \cite{lezcano2019cheap, helfrich2018orthogonal} where $X=\expm(A)$ or $X=\text{Cayley}(A)$ was used and then one optimized skew symmetric $A$ instead. Although these authors did not consider a Stiefel generalization, this could be possible, but even without the generalization the computational cost is already high.

\vspace{-0.05in}
This paper considers exact structure preservation. For applications included here, this is either empirically more beneficial (Sec.\ref{sec_ViT}) or a necessity (Sec.\ref{sec_PRW} \& Apdx.\ref{app_sec_GEV}). See comparisons of most relevant approaches in Tab.~\ref{tab_comparison} and more on complexity in Apdx.~\ref{sec_complexity}.


%

\vspace{-0.15in}

\section{Derivation of the Optimizers and Their Properties}
\vspace{-0.15in}

We represent Stiefel manifold as $\mathsf{St}(n,m):=\{X\in\mathbb{R}^{n\times m}: X^\top X=I_{m\times m}\}$ where $n\geq m$\footnote{The problem is harder when $n>m$, which thus will be our focus. For the $n=m$ case, see Sec. \ref{sec_SOn}.}, i.e. matrices with orthonormal columns. This is a natural embedding in Euclidean space. Based on identity isomorphism between cotangent \& tangent spaces (see Apdx.~\ref{sec:tangentVsCotangent}), it also gives:
\begin{proposition}
    The {\it (co)tangent space} of $\mathsf{St}$ at $X\in\mathsf{St}$ is $T_X \mathsf{St}:=\{\Delta\in\mathbb{R}^{n\times m}:X^\top \Delta+\Delta^\top X=0\}$
    ; 
    its {\it (co)tangent bundle} is $T\mathsf{St}:=\{(X, \Delta):X\in \mathsf{St}, \Delta\in T_X \mathsf{St} \}$.
\end{proposition}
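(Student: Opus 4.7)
The plan is to realize $\mathsf{St}(n,m)$ as a regular level set in $\mathbb{R}^{n\times m}$ and read off its tangent space via the kernel of the constraint differential. Define $F:\mathbb{R}^{n\times m}\to \mathrm{Sym}(m)$ by $F(X):=X^\top X-I_{m\times m}$, so that $\mathsf{St}=F^{-1}(0)$. The differential at $X$ is the linear map $dF_X:\mathbb{R}^{n\times m}\to\mathrm{Sym}(m)$, $dF_X(\Delta)=X^\top\Delta+\Delta^\top X$, which one checks in one line by expanding $F(X+\epsilon\Delta)-F(X)$ and collecting the first-order term. (The image really lands in $\mathrm{Sym}(m)$ because the expression is manifestly symmetric.)

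The key step is to verify that $F$ is a submersion at every $X\in\mathsf{St}$, so that $0$ is a regular value and $\mathsf{St}$ is a smooth embedded submanifold to which $dF_X$ gives the tangent space. For surjectivity of $dF_X$, given any $S\in\mathrm{Sym}(m)$ I would exhibit an explicit preimage: take $\Delta:=\tfrac{1}{2}XS$. Using $X^\top X=I_{m\times m}$, $X^\top\Delta=\tfrac12 S$ and $\Delta^\top X=\tfrac12 S^\top=\tfrac12 S$, whence $dF_X(\Delta)=S$. This is essentially the only nontrivial computation and I do not anticipate any obstacle beyond recognizing this construction.

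By the regular value (submersion) theorem, $\mathsf{St}$ is then a smooth submanifold of $\mathbb{R}^{n\times m}$ and its tangent space at $X$ is exactly $\ker dF_X=\{\Delta\in\mathbb{R}^{n\times m}:X^\top\Delta+\Delta^\top X=0\}$, which is the claimed characterization. For the cotangent statement, since $\mathsf{St}$ sits inside Euclidean $\mathbb{R}^{n\times m}$ with the Frobenius inner product, the identity isomorphism between tangent and cotangent spaces referenced just before the statement (Apdx.~\ref{sec:tangentVsCotangent}) allows $T_X^*\mathsf{St}$ to be identified with $T_X\mathsf{St}$ through the same matrix representation. Finally, $T\mathsf{St}:=\{(X,\Delta):X\in\mathsf{St},\,\Delta\in T_X\mathsf{St}\}$ is precisely the definition of the (co)tangent bundle of an embedded manifold, completing the proposition.
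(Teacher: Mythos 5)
Your proof is correct, and it is in fact more complete than what the paper offers: the paper states this proposition as a known fact and, in Apdx.~\ref{app_Stnn}'s neighboring remarks (the ``Intuition of the cotangent bundle'' subsection), only gives the kinematic one-liner of differentiating the constraint $X(t)^\top X(t)=I$ to obtain $\dot X^\top X + X^\top \dot X = 0$. That argument establishes only the inclusion $T_X\mathsf{St}\subseteq\{\Delta: X^\top\Delta+\Delta^\top X=0\}$. Your regular-value-theorem route supplies the missing converse: by exhibiting $\Delta=\tfrac12 XS$ as an explicit preimage of any $S\in\mathrm{Sym}(m)$, you verify that $dF_X$ is surjective (so $0$ is a regular value, $\mathsf{St}$ is an embedded submanifold of the expected dimension $nm-\tfrac{m(m+1)}{2}$), and then $T_X\mathsf{St}=\ker dF_X$ is an equality rather than merely an inclusion. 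The identification of $T_X^*\mathsf{St}$ with $T_X\mathsf{St}$ via the ambient Frobenius inner product matches the paper's stated convention. In short: same spirit as the paper's intuition, but you closed the gap that the paper leaves implicit.
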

    \vspace{ -0.08in}

Throughout this paper, we focus on a family of Riemannian metrics on $\mathsf{St}$ defined in the following.
\begin{definition}[canonical-type metric]
\label{def_metric}
For a fixed constant $a<1$, the {\it canonical-type metric} $g_X:T_X \mathsf{St}\times T_X \mathsf{St} \to \mathbb{R}$ is defined as 
\vspace{-0.2in}

\begin{equation}
g_X(\Delta_1, \Delta_2)=\tr(\Delta_1^\top (I-aXX^\top) \Delta_2), \quad\forall \Delta_1, \Delta_2\in T_X \mathsf{St}.
\label{eqn_metric_e}
\end{equation}
\vspace{-0.3in}

\end{definition}
The following are two commonly used examples of the canonical-type metric  \citep{tagare2011notes}:
\begin{itemize}[leftmargin=*]
    \item When $a=0$, $g_X(\Delta_1, \Delta_2)=\tr(\Delta_1^\top \Delta_2)$,  which corresponds to the Euclidean metric.
    \item When $a=1/2$, $g_X(\Delta_1, \Delta_2)=\tr(\Delta_1^\top (I-\frac{1}{2}XX^\top) \Delta_2)$, which is known as the canonical metric.
\end{itemize}
\vspace{-0.05in}

\textbf{Notation.}  Denote Euclidean gradient in ambient space by $\frac{\partial f}{\partial X}:=\big(\frac{\partial f}{\partial X_{ij}}(X)\big)_{ij}$. Denote by $\odot$, $\oslash$, and  $A^{\circ c}$ element-wise product, division, and $c$th power.  Denote expm to be the matrix exponential.
\vspace{-0.1in}

\subsection{Optimization dynamics in continuous time}

\vspace{-0.1in}

In this section, we study the Stiefel optimization problem~\eqref{eqn_problem} from a variational perspective. The manifold setup yields a constrained variational problem, which is hard to handle due to nonlinear geometry of the function space. Thus we introduce an alternative variational formulation based on functional Lagrange multiplier to bypass this difficulty. It generates an ODE that never escapes from the manifold $T\mathsf{St}$ and is guaranteed to converge to a local minimizer of the function $f$ on $\mathsf{St}$.


In Euclidean space, momentum has long been used to accelerate gradient descent \citep{nesterov1983method}. More recently, continuous time limit brought useful tools to complement the classical analyses for such optimizers (e.g., \cite{su2014differential}), and a variational formulation was then established to unify a large family of momentum-based Euclidean optimizers \citep{wibisono2016variational}. This formulation can also provide a powerful and intrinsic way to generalize momentum-based optimization to non-Euclidean settings. One demonstration is, for example, accelerated optimization on Lie groups \citep{tao2020variational}. Following the same line of thoughts, we consider Stiefel optimization by first defining a time-dependent Lagrangian $L:T\mathsf{St}\times \mathbb{R}_+\rightarrow\mathbb{R}$ as
\vspace{-0.05in}
\begin{equation}
    L(X, \dot{X}, t):=r(t)\Big[ \frac{1}{2}g_X( \dot{X},\dot{X})-f(X) \Big]=r(t)\Big[\frac{1}{2}\tr\left(\dot{X}^\top(I-aXX^\top)\dot{X}\right)-f(X)\Big].
\end{equation}
\vspace{-0.02in}
Without $r(t)$, this would be a standard Lagrangian for mechanics on manifold, and it would generate dynamics in which the kinetic energy $\frac{1}{2}g_X( \dot{X},\dot{X})$ and the potential energy $f(X)$ continuously exchange with each other. Choosing $r(t)$ to be an \emph{increasing} function, however, will break time translational symmetry and  introduce dissipation in the system; consequently, the total energy $\frac{1}{2}g_X(\dot{X},\dot{X}) + f(X)$ will monotonically decrease in time, leading to the minimization of $f$. Given this Lagrangian, we can define a variational problem (VP) on manifold whose corresponding stationarity condition (known as Euler-Lagrange equation) yields an ODE that optimizes $f$ in continuous time
\vspace{-0.05in}
\begin{equation}
\label{probm:constrained_VP}
\text{Constrained VP: }\delta \int_0^T L(X(t),\dot{X}(t),t) dt = 0,\  s.t.\ \big(X(t),\dot{X}(t)\big) \in T \mathsf{St},\ \forall\ 0\leq t\leq T.
\end{equation}
\vspace{-0.2in}

However, to obtain concrete algorithms, this constrained VP is very difficult to handle because the variation of $X(\cdot)$ has to keep it in a nonlinearly constrained function space.  \cite{tao2020variational} applied a tool of reduction via Lie algebra to solve the case of Lie group manifold, but for Stiefel manifold $\mathsf{St}(n,m)$, unless $n=m$, there is no group structure and this technique no longer applies. Therefore, we transfer~\eqref{probm:constrained_VP} into an unconstrained variational problem, using tools described in, e.g., \cite{chen2021grit}. By adding a Lagrange multiplier\footnote{It differs from the seminal work \citep{wen2013feasible} as our setup (and the Lagrange multiplier) is dynamical.} $\Lambda(t)\in \mathbb{R}^{m\times m}$ for the constraints $X(t)^\top X(t)-I=0$, we augment the Lagrangian to be
\begin{equation}
    \hat{L}(X, \dot{X}, \Lambda, t)=r(t)\Big[\frac{1}{2}\tr\left(\dot{X}^\top(I-aXX^\top)\dot{X}\right)-f(X)\Big]-\frac{1}{2}\tr\left(\Lambda^\top(X^\top X-I)\right),
\end{equation}
where $\Lambda(t)\in \mathbb{R}^{m\times m}$ is a symmetric matrix. The benefit is that now $X$ can be varied in an unconstrained, flat function space, corresponding to
\vspace{-0.06in}
\begin{equation}
\label{probm:unconstrained_VP}
\text{Unconstrained VP: }\delta \int_0^T \hat{L}(X(t),\dot{X}(t),\Lambda(t),t) dt = 0,\ \forall\  0\leq t\leq T.
\end{equation}
\vspace{-0.2in}

To sum up the above discussion, problem~\eqref{eqn_problem} can be resolved through (un)constrained variational problem~\eqref{probm:constrained_VP},\eqref{probm:unconstrained_VP}, i.e., 
$\eqref{eqn_problem}\Longleftarrow\eqref{probm:constrained_VP}\Longleftrightarrow \eqref{probm:unconstrained_VP}$,
in the sense that the solutions of \eqref{probm:constrained_VP},\eqref{probm:unconstrained_VP} solve problem~\eqref{eqn_problem}; meanwhile, \eqref{probm:unconstrained_VP} can be explicitly solved, as detailed in the following (proof is in Apdx.~\ref{proof_thm_XQ_dynamics}):

\begin{theorem}[Optimization dynamics on $T\mathsf{St}$]
\label{thm_XQ_dymamics}

\vspace{-0.02in}

To use problem~\eqref{probm:unconstrained_VP} for solving problem~\eqref{eqn_problem}, we have:
\vspace{-0.05in}

\begin{enumerate}
    \item The solution of the unconstrained variational problem~\eqref{probm:unconstrained_VP} is the following ODE
    \vspace{-1pt}
    \begin{equation}\hspace{-10pt}
        \begin{cases}
        \dot{X}=&Q\\
        \dot{Q}=&-\gamma Q-XQ^\top Q-\frac{3a}{2}(I-XX^\top)QQ^\top X
        -\frac{\partial f}{\partial X}+\frac{1+b}{2}XX^\top\frac{\partial f}{\partial X}+\frac{1-b}{2}X\frac{\partial f}{\partial X}^\top X
        \end{cases}
        \label{eqn_XQ_dynamics}
    \end{equation}
    \vspace{-2pt}
     where $(X(t), Q(t))=(X(t),\dot{X}(t)) \in \mathbb{R}^{n\times m}\times\mathbb{R}^{n\times m}$ is the tuple of state variable and its momentum, $\gamma(t):=\dot{r}(t)/r(t)$ is the scalar friction coefficient, and $b:=\frac{a}{a-1}$ is a constant depending on the canonical-type metric~\eqref{eqn_metric_e}.

    \item For any isolated local minimum $X_*\in \mathsf{St}$ of $f$, there exists a neighbourhood $U\subset T\mathsf{St}$ of $(X_*, \textbf{0})$ s.t., for any initial condition $(X_0, Q_0)\in U$, the solution $(X(t), Q(t))$ of the system \eqref{eqn_XQ_dynamics} converges to $ (X_*,\textbf{0})$ as $t\to \infty$.
    \end{enumerate}
\end{theorem}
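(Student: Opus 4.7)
For part 1, my approach is to write the Euler--Lagrange equations for $\hat L$ and then eliminate the Lagrange multiplier $\Lambda$. Varying $\Lambda$ returns $X^\top X = I$; varying $X$ yields a second-order ODE of the form $\frac{d}{dt}\bigl(r(t)(I - aXX^\top)\dot X\bigr) = r(t)\,\Phi(X,\dot X) - 2X\Lambda$, where $\Phi$ collects $-\partial f/\partial X$ together with the contributions from the $X$-dependence of the metric (terms of shape $aXX^\top$ differentiated). Setting $Q = \dot X$, dividing by $r(t)$ and introducing $\gamma = \dot r/r$ gives an expression for $\dot Q$ that still contains the unknown symmetric matrix $\Lambda$. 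To pin down $\Lambda$ I would differentiate the constraint $X^\top X = I$ twice, obtaining $X^\top Q + Q^\top X = 0$ and $X^\top \dot Q + \dot Q^\top X + 2Q^\top Q = 0$, substitute the $\dot Q$ expression into the latter, and solve the resulting linear equation for $\Lambda$. Inserting this $\Lambda$ back and simplifying with $b = a/(a-1)$ should reproduce \eqref{eqn_XQ_dynamics} exactly. I expect the main difficulty here to be the bookkeeping that splits $X^\top(\partial f/\partial X)$ and $XX^\top QQ^\top$ into symmetric and skew-symmetric parts, since only a symmetric combination can be absorbed into $\Lambda$; this is precisely where the coefficients $(1+b)/2$ and $(1-b)/2$ in \eqref{eqn_XQ_dynamics} originate.

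For part 2 I would use LaSalle's invariance principle on $T\mathsf{St}$ with the total-energy Lyapunov function $E(X,Q) := \tfrac{1}{2}\tr\!\bigl(Q^\top(I - aXX^\top)Q\bigr) + f(X)$. Differentiating along \eqref{eqn_XQ_dynamics}, the kinetic term contributes $\tr\!\bigl(Q^\top(I - aXX^\top)\dot Q\bigr)$ plus a correction from $\tfrac{d}{dt}(aXX^\top)$, while the potential contributes $\tr\!\bigl((\partial f/\partial X)^\top Q\bigr)$. Using $X^\top Q + Q^\top X = 0$ to simplify the projection terms in $\dot Q$ when contracted with $(I - aXX^\top)Q$, all non-dissipative contributions should cancel, leaving the clean identity $\dot E = -\gamma(t)\, g_X(Q, Q) \le 0$, where $\gamma > 0$ because $r$ is chosen to be increasing.

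Near an isolated local minimum $X_*$, the shifted energy $\tilde E := E - f(X_*)$ is nonnegative on a neighborhood $U$ of $(X_*, \mathbf 0)$ in $T\mathsf{St}$ and vanishes only at $(X_*, \mathbf 0)$, since $g_X$ is positive definite on $T_X\mathsf{St}$ for $a < 1$ and $f(X) > f(X_*)$ for $X \in U$ with $X \ne X_*$. A sufficiently small sublevel set of $\tilde E$ inside $U$ is therefore positively invariant and compact, so LaSalle's theorem implies every trajectory starting in it converges to the largest invariant subset on which $Q \equiv 0$. On that set, $\dot X = 0$ and the $\dot Q$ equation forces the Riemannian gradient of $f$ at $X$ with respect to the canonical-type metric to vanish; by the isolation hypothesis, the only such point in $U$ is $X_*$, yielding convergence to $(X_*, \mathbf 0)$.
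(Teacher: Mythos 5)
Your proposal is correct and follows essentially the same route as the paper. The paper eliminates $\Lambda$ by passing through a Hamiltonian picture (Legendre transform to $P = r(I - aXX^\top)\dot X$, Hamilton's equations, then the coordinate change $Q = \tfrac{1}{r}(I - bXX^\top)P$) rather than manipulating the second-order Euler--Lagrange equation directly as you propose, but the pivotal step is identical in both: differentiate $X^\top X = I$ twice, impose symmetry of $\Lambda$, and solve the resulting linear equation. For Part 2 the paper uses the same energy $E(X,Q) = \tfrac12\tr(Q^\top(I - aXX^\top)Q) + f(X)$ and obtains the same dissipation identity $\dot E = -\gamma\,\tr(Q^\top(I-aXX^\top)Q)$; it then argues convergence somewhat informally ("energy decreasing implies $Q \to 0$") together with a lemma identifying zeros of the projected gradient as first-order stationary points, whereas your explicit appeal to LaSalle on a compact sublevel set is a cleaner way to make the same conclusion rigorous.
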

\vspace{-0.08in}

One feature of ODE~\eqref{eqn_XQ_dynamics} is the preservation of constraints: even though the ODE is defined in the Euclidean space, as long as it starts on the manifold, it will never leave the manifold:

\begin{theorem}[Constrained optimization with unconstrained dynamics] If the initial condition of  \eqref{eqn_XQ_dynamics} is on $T\mathsf{St}$, the cotangent bundle of Stiefel manifold,
then dynamics \eqref{eqn_XQ_dynamics} automatically stays on $T\mathsf{St}$, i.e., if
$  X(0)^\top X(0)=I_{m\times m}, \ 
    X(0)^\top Q(0)+Q(0)^\top X(0)=0_{m\times m}, 
$, then for all $t\ge 0$
$  X(t)^\top X(t)=I_{m\times m}, \ 
    X(t)^\top Q(t)+Q(t)^\top X(t)=0_{m\times m}.
$
\hfill \emph{Proof is in Apdx.~\ref{proof_thm_XQ_constraint}}.
\label{thm_XQ_constraint}
\end{theorem}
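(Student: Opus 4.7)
I would prove both constraints are preserved by tracking the two scalar-valued (actually matrix-valued) residuals
\[
C_1(t) := X(t)^\top X(t) - I, \qquad C_2(t) := X(t)^\top Q(t) + Q(t)^\top X(t),
\]
and showing they satisfy a coupled linear ODE that has $(C_1, C_2) \equiv (0,0)$ as a solution; invariance then follows from uniqueness of ODE solutions. This is the standard ``tangency of the vector field'' strategy: if the RHS of \eqref{eqn_XQ_dynamics} is tangent to $T\mathsf{St}$ at every point of $T\mathsf{St}$, the flow cannot escape.

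The first identity is immediate from $\dot X = Q$:
\[
\dot C_1 = Q^\top X + X^\top Q = C_2.
\]
For $\dot C_2 = X^\top \dot Q + \dot Q^\top X + 2 Q^\top Q$ I would substitute the RHS of \eqref{eqn_XQ_dynamics} term by term, restricting attention to the constraint set where $X^\top X = I$. The key simplifications are: (i) $X^\top(-\gamma Q) = -\gamma X^\top Q$ contributes $-\gamma C_2$ after symmetrization; (ii) $X^\top(-X Q^\top Q) = -Q^\top Q$, whose symmetrization gives $-2 Q^\top Q$, exactly cancelling the $+2Q^\top Q$ from $\dot C_2$'s formula; (iii) the cubic term $-\tfrac{3a}{2}(I - XX^\top) Q Q^\top X$ is annihilated by $X^\top$ from the left because $X^\top(I - XX^\top) = 0$ under $X^\top X = I$, so it contributes nothing; (iv) the gradient terms combine as $X^\top\!\big(\!-\!\derative + \tfrac{1+b}{2} X X^\top \derative + \tfrac{1-b}{2} X \derative^\top X\big) = \tfrac{b-1}{2} X^\top \derative + \tfrac{1-b}{2} \derative^\top X$, and after adding the transpose the two expressions cancel exactly, regardless of $b$. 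Putting these together yields $\dot C_2 = -\gamma C_2$ whenever $C_1 = 0$.

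More carefully, without assuming $C_1 = 0$, the same bookkeeping produces an expression of the form $\dot C_2 = -\gamma C_2 + \mathcal{R}(X, Q, C_1)$, where $\mathcal{R}$ is a smooth (in fact polynomial) function that vanishes identically on $\{C_1 = 0\}$; every troublesome term above picks up a factor of $X^\top X - I = C_1$ when the identity is not used. Combined with $\dot C_1 = C_2$, this gives a closed ODE for the pair $(C_1, C_2)$ whose RHS is smooth in its arguments and whose vanishing values produce vanishing derivatives. Hence $(C_1, C_2) \equiv (0, 0)$ is a solution, and by uniqueness of solutions to smooth ODEs, any trajectory with $C_1(0) = 0$ and $C_2(0) = 0$ must coincide with it for all $t \geq 0$, yielding the claim.

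\paragraph{Expected obstacle.} The only real work is the algebraic verification in paragraph two: keeping track of left/right multiplications by $X$ and $X^\top$, the transpose-symmetrization, and checking that the $\partial f/\partial X$ terms cancel with the right signs (this is where the specific coefficients $\tfrac{1+b}{2}$ and $\tfrac{1-b}{2}$ matter, though the cancellation is actually independent of $b$). Each step is routine but the computation is long enough that a sign error would be easy to make; everything else --- tangency-implies-invariance and uniqueness of smooth ODEs --- is standard.
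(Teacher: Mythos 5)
Your proposal is correct and follows essentially the same route as the paper's proof: write down the ODE satisfied by the constraint quantities along a trajectory of \eqref{eqn_XQ_dynamics} (treating $X(t)$, $Q(t)$, and $\frac{\partial f}{\partial X}(X(t))$ as known time-dependent coefficients), observe that the constraint-satisfying values give an exact solution, and conclude by uniqueness of ODE solutions; your term-by-term cancellations (the $Q^\top Q$ cancellation, $X^\top(I-XX^\top)=0$, and the skew-symmetry of the gradient contribution killing it under symmetrization) all check out. The only difference is cosmetic but in your favor: you track the residuals $X^\top X - I$ and $X^\top Q + Q^\top X$, which matches the theorem's stated hypotheses exactly, whereas the paper's appendix works with $(X^\top X, X^\top Q)$ and asserts $X^\top Q \equiv 0$, an invariance that only the symmetric part actually enjoys.
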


\vspace{-0.1in}

\subsection{Structure-preserving discretization via  variable decomposition and operator splitting}
\vspace{-0.08in}

\label{sec_St_discretization}

Although the continuous dynamics~\eqref{eqn_XQ_dynamics} preserves the constraints, 
such preservation is in general not guaranteed when time is discretized. 
The construction of our discretization briefly follows four steps: a geometric decomposition of momentum, a carefully designed operator splitting scheme to approximate the ODEs, structure-preserving approximations of the split system, and a structure-preserving relaxation that further reduces the computational cost. Details now follow.

\textbf{Preparation: from a static decomposition of the tangent space to a decomposition of $Q$ dynamics.} To retain the preservation of geometric structures through a time discretization, we first decompose the tangent space $T_X\mathsf{St}$ into $X$ and $X^\perp$ components (see more details in \citet{tagare2011notes}); more precisely, given a tangent vector $Q$ represented by an $n\times m$ matrix, we rewrite it as $Q=XY+V$ for $Y\in\mathbb{R}^{m\times m}, V\in\mathbb{R}^{n\times m}$, and use $Y,V$ to replace $Q$.
This transformation changes the constraint $X^\top Q+Q^\top X=0$ to $\{ Y^\top+Y=0,\, X^\top V=0 \}$ instead (see  Apdx \ref{app_derivation_Y_V_decomposition}). Another advantage of this decomposition is that $Y,V$ naturally split the canonical-type metric~\eqref{eqn_metric_e} (see Apdx \ref{app_rmk_metric_a}).

\vspace{-0.02in}

Although a fixed tangent $Q$ can be uniquely decomposed into $Y$ and $V$, when we start to consider a dynamical $Q(t)$ and make the decomposition at each $t$, we need to understand how the corresponding $Y(t)$ and $V(t)$ evolve. 
This is not a trivial question, but it can be proved that the new dynamics is
    \vskip -0.26in
    \begin{align}
        \dot{X}=XY&+V,\qquad\qquad
        \dot{Y}=-\gamma Y-\frac{1-b}{2}\Big(X^\top \frac{\partial f}{\partial X}-\frac{\partial f}{\partial X}^\top X\Big),\notag\\
        \dot{V}=&-\gamma V+\frac{3a-2}{2}VY-XV^\top V-\left(I-XX^\top\right)\frac{\partial f}{\partial X},
            \label{eqn_XYV_dynamics}
    \end{align}
    \vskip -0.1in
    with initial condition satisfying $
        X(0)^\top X(0)= I,
        Y(0)^\top+Y(0)=0,
        X(0)^\top V(0)=0$. This system is equivalent to~\eqref{eqn_XQ_dynamics} via $Q(t)=X(t)Y(t)+V(t)$ and preserving $X(t)^\top X(t)= I, Y(t)^\top+Y(t)=0, X(t)^\top V(t)=0$ for all $t$ (Thm.\ref{thm_XYV_dynamics_structure_preserving}). Moreover, it will be amenable to a good discretization and is thus the base for constructing our optimizer. 
        
        \vspace{-0.06in}
        With this decomposition defined, we can finally make the phrase `structure preservation' precise:
\begin{definition}
\label{def_structure_preserving}
{\it Structure preservation} means variables exactly satisfying all constraints for all time. That is, for `$XQ$'-system, $X(t)^\top X(t)= I, X(t)^\top Q(t)=0$, $\forall t$; for `$XYV$'-system, $X(t)^\top X(t)= I, Y(t)^\top+Y(t)=0, X(t)^\top V(t)=0$, $\forall t$. In comparison, staying exactly on Stiefel, i.e. $X(t)^\top X(t)= I$ is termed `feasible'  \citep{wen2013feasible,ablin2022fast}, but we have additional constraints due to momentum.

\end{definition}

\vspace{-0.1in}
\textbf{Step I: operator splitting of the ODEs. }
To handle the high nonlinearity of ODE \eqref{eqn_XYV_dynamics} and maintain the preservation of constraints after discretization, we adopt an operator splitting method, based on a general fact that a numerical discretization of an ODE can be obtained by composing the (approximate) flow maps of split ODEs \citep{mclachlan2002splitting}. More precisely, we split the vector field of \eqref{eqn_XYV_dynamics} as a sum of three vector fields, each associated with one of the following ODEs:
\vspace{-0.07in}

{
\small
\hspace{-15pt}
\begin{minipage}{0.4\textwidth}
\vspace{-10pt}
    \begin{equation}
    \begin{cases}
        &\dot{X}=XY\\
        &\dot{Y}=-\gamma Y\\
        &\quad -\frac{1-b}{2}\left(X^\top\derative-\derative^\top X\right)\\
        &\dot{V}=0
    \end{cases}
    \label{eqn_split_1}
    \end{equation}
    \end{minipage}
\begin{minipage}{0.35\textwidth}
\vspace{-10pt}
    \begin{equation}\begin{cases}
        &\dot{X}=0\\
        &\dot{Y}=0\\
        &\dot{V}=-\gamma V+\frac{3a-2}{2}VY\\
        &\quad -(I-XX^\top)\derative
    \end{cases}
    \label{eqn_split_2}
    \end{equation}
    \end{minipage}
    \begin{minipage}{0.25\textwidth}
    \vspace{-10pt}
    \begin{equation}\begin{cases}
        \dot{X}=&V\\
        \dot{Y}=&0\\
        \dot{V}=&-XV^\top V
    \end{cases}
    \label{eqn_split_3}
    \end{equation}
\end{minipage}.
}
 \vspace{-0.05in}

Define the corresponding time-$h$ evolution maps $\phi_1,\phi_2,\phi_3$ of~Eq.\eqref{eqn_split_1}\eqref{eqn_split_2}\eqref{eqn_split_3} to be $\phi_j: [X(t),Y(t),V(t)] \mapsto [X(t+h),Y(t+h),V(t+h)]$ for system $j=1,2,3$. Note $\phi_1,\phi_2,\phi_3$ give the exact solutions of these split ODEs. Then we see our specific split honors all constraints:
\begin{theorem}
\label{thm_splittings_structure_preserving}
    $\phi_1,\phi_2,\phi_3$ are all structure preserving.
    \hfill
    \emph{Proof is in Apdx.~\ref{proof_splittings_structure_preserving}}.
\end{theorem}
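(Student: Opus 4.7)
The plan is to set up three constraint residuals, $C_1(t):=X(t)^\top X(t)-I$, $C_2(t):=Y(t)+Y(t)^\top$, and $C_3(t):=X(t)^\top V(t)$, and to show that along each split flow $\phi_j$ the triple $(C_1,C_2,C_3)$ satisfies a \emph{homogeneous} linear matrix ODE, i.e.\ one whose right-hand side vanishes identically when $(C_1,C_2,C_3)=(0,0,0)$. Since the zero trajectory is the unique solution of such an ODE, initializing with vanishing residuals forces them to remain zero on the entire interval $[0,h]$, which is exactly the structure preservation of Definition~\ref{def_structure_preserving}. Several residuals are constant for free: in Split~2, both $X$ and $Y$ are frozen so $C_1,C_2$ never change; in Split~3, $Y$ is frozen so $C_2$ is constant; in Split~1 the right-hand side of $\dot Y$ contains $X^\top\derative-\derative^\top X$, which is skew-symmetric, so $\dot Y+\dot Y^\top=-\gamma C_2$.

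For the nontrivial derivatives I would just differentiate directly, substitute $X^\top X=I+C_1$ where useful, and read off the answer. In Split~1, $\dot C_1=Y^\top X^\top X+X^\top X Y=Y^\top+Y+Y^\top C_1+C_1 Y=C_2+Y^\top C_1+C_1 Y$, and since $V$ is frozen, $\dot C_3=(XY)^\top V=Y^\top C_3$. In Split~2, multiplying $\dot V$ by $X^\top$ and applying $X^\top X X^\top=(I+C_1)X^\top$ gives $\dot C_3=-\gamma C_3+\tfrac{3a-2}{2}C_3 Y+C_1 X^\top\derative$. In Split~3, $\dot C_1=V^\top X+X^\top V=C_3^\top+C_3$, and $\dot C_3=V^\top V-X^\top X V^\top V=-C_1 V^\top V$. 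In each equation the right-hand side is manifestly a linear combination of $C_1,C_2,C_3$ with state-dependent coefficients, so the homogeneity claim is immediate, and in particular each $\phi_j$ preserves structure.

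The one conceptual subtlety is that constraint preservation does \emph{not} decouple across the three residuals: in Split~1, preservation of $C_1$ borrows the vanishing of $C_2$; in Split~2, preservation of $C_3$ borrows the vanishing of $C_1$; in Split~3, the coupling between $C_1$ and $C_3$ runs in both directions. Consequently the argument has to package the three residuals as a single coupled linear system and appeal to ODE uniqueness, rather than checking each constraint in isolation. Beyond that, the remaining work is routine algebra whose only non-obvious inputs are the skew-symmetry of $X^\top\derative-\derative^\top X$ and the identity $X^\top X X^\top=(I+C_1)X^\top$, so I do not expect any real obstacle.
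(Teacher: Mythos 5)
Your proof is correct and takes essentially the same approach as the paper: differentiate the constraint residuals along each split flow, observe that they satisfy coupled homogeneous linear ODEs (with time-dependent matrix coefficients), and invoke ODE uniqueness to conclude they stay identically zero. The only minor deviation is that the paper handles $\phi_2$ by displaying the exact solution $V(t)$ of Eq.~\eqref{eqn_V_sol} and leaving the constraint check implicit, whereas your residual-ODE argument treats all three splits uniformly without needing that closed form.
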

\vspace{-0.1in}

\textbf{Step II: structure-preserving approximation of exact flow maps. }Due to the nonlinearity, $\phi_1$ and $\phi_3$ do not admit analytical expressions; $\phi_2$ does have an explicit expression (Eq. \eqref{eqn_V_sol}), but an approximation will still reduce computational costs while maintaining certain accuracy (see Fig. \ref{fig_metric_inner_prod}). Therefore we first denote the 1st-order approximation of the exact flow maps $\phi_1,\phi_2,\phi_3$ to be $\bar{\phi}_1,\bar{\phi}_2,\bar{\phi}_3$, where $\bar{\phi}_j:[X_0, Y_0, V_0] \mapsto [X_h,Y_h,V_h] = \phi_j([X_0,Y_0,V_0])+\mathcal{O}(h^2), j=1,2,3$. Then
\vspace{-0.2in}

\begin{minipage}{0.598\textwidth}
\begin{equation}
\hspace{-12pt}
   \text{$\bar{\phi}_1$}:
\begin{cases}
    &X_h=X_0 \, \expm(hY_h)\\
    &Y_h=\exp(-\gamma h)Y_0\\
    &~ -\frac{(1-b)(1-\exp(-\gamma h))}{2\gamma}\left(X_0^\top\frac{\partial f}{\partial X_0}-\frac{\partial f}{\partial X_0}^\top X_0\right)\\
    &V_h=V_0
\end{cases}
\label{scheme_split_1}
\end{equation}
\end{minipage}
\begin{minipage}{0.402\textwidth}
    \begin{equation}
  \text{$\bar{\phi}_3$}: 
\begin{cases}
    X_\dagger=&X_0+hV_0X_0^\top X_0\\
    X_h=&X_\dagger(X_\dagger^\top X_\dagger)^{-1/2}\\
    Y_h=&Y_0\\
    V_h=&V_0-hX_0 V_0^\top V_0.
\end{cases}
\label{scheme_split_3}
\end{equation}
\end{minipage}
\vspace{-0.05in}
\begin{equation}
 \bar{\phi}_2:\ X_h=X_0,\ V_h=(1-\gamma h) V_0+\frac{3a-2}{2} h V_0 Y_0 - h\left(I-X_0X_0^\top\right)\frac{\partial f}{\partial X}(X_0),\   Y_h=Y_0.
 \label{scheme_split_2}
\end{equation}
\vspace{-0.15in}

\begin{theorem}
\label{thm_discretizations_structure_preserving}
$\bar{\phi}_1,\bar{\phi}_2,\bar{\phi}_3$ are all structure preserving. \hfill \emph{Proof is in Apdx.~\ref{proof_discretizations_structure_preserving}}.
\end{theorem}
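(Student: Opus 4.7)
The plan is to verify, for each of the three maps $\bar{\phi}_1,\bar{\phi}_2,\bar{\phi}_3$, the three constraints $X_h^\top X_h=I$, $Y_h^\top+Y_h=0$, and $X_h^\top V_h=0$, assuming the input $(X_0,Y_0,V_0)$ satisfies them (this is all that is needed, since in the splitting composition the output of one map becomes the input of the next). Each map can be handled independently by direct algebra, and almost every check is immediate once two basic facts are in hand: (i) the matrix exponential of a skew-symmetric matrix is orthogonal, and (ii) $X_0^\top(I-X_0X_0^\top)=0$ whenever $X_0^\top X_0=I$.

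For $\bar{\phi}_1$, I would first observe that the correction appearing in the $Y_h$ update, namely $X_0^\top\frac{\partial f}{\partial X_0}-\frac{\partial f}{\partial X_0}^\top X_0$, is skew by construction, so $Y_h$ is a linear combination of skew matrices and is therefore skew. Then $\expm(hY_h)$ is orthogonal by (i), which gives $X_h^\top X_h=\expm(hY_h)^\top(X_0^\top X_0)\expm(hY_h)=I$ at once, and $X_h^\top V_h=\expm(hY_h)^\top X_0^\top V_0=0$ since $V_h=V_0$. For $\bar{\phi}_2$, only $V$ moves, so the first two constraints hold trivially, and $X_0^\top V_h=0$ reduces to a three-term check: the $(1-\gamma h)V_0$ and the $V_0Y_0$ contributions vanish by $X_0^\top V_0=0$, while the gradient-projection term vanishes by fact (ii).

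The only step that is not a single line is $\bar{\phi}_3$. Antisymmetry of $Y_h$ is immediate because $Y_h=Y_0$, and the update $X_h=X_\dagger(X_\dagger^\top X_\dagger)^{-1/2}$ is a symmetric polar retraction, so $X_h^\top X_h=I$ by construction (well-defined for $h$ small enough that $X_\dagger^\top X_\dagger$ is positive definite). The main calculation I would carry out is $X_h^\top V_h=0$. Since $(X_\dagger^\top X_\dagger)^{-1/2}$ is invertible, it suffices to show $X_\dagger^\top V_h=0$. Using $X_0^\top X_0=I$ to simplify $X_\dagger=X_0+hV_0$, I would expand $(X_0+hV_0)^\top(V_0-hX_0 V_0^\top V_0)$; the four resulting terms collapse by the input constraint $X_0^\top V_0=0$ (and its transpose) into the cancellation $-hV_0^\top V_0+hV_0^\top V_0=0$. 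This cancellation, which is precisely why the correction $-XV^\top V$ is present in \eqref{scheme_split_3}, is the one mildly non-routine piece of the whole proof and is the main (and only) place where some care is required; once it is carried out, the theorem is complete.
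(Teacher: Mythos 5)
Your proof is correct and follows essentially the same route as the paper's own proof: checking the three constraints for each of the three maps by direct algebra, using skew-symmetry of $Y_h$ and orthogonality of $\expm(hY_h)$ for $\bar\phi_1$, the annihilation $X_0^\top(I-X_0X_0^\top)=0$ for $\bar\phi_2$, and the four-term cancellation in $X_\dagger^\top V_h$ followed by invertibility of $(X_\dagger^\top X_\dagger)^{-1/2}$ for $\bar\phi_3$. One small remark: you don't actually need $h$ small for positive definiteness of $X_\dagger^\top X_\dagger$, since under the input constraint $X_\dagger^\top X_\dagger = I + h^2 V_0^\top V_0 \succ 0$ for every $h$.
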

\vspace{-0.12in}

This shows not only the split flow maps but their decently designed discretizations maintain the constraints of $X,Y,V$. There are several specially designed features to enable the numerical stability of the integrators:
1) `$X_\dagger(X_\dagger^\top X_\dagger)^{-1/2}$' in $\bar{\phi}_3$ is part of a nontrivial discretization scheme and the same as polar retraction \citep{absil2009optimization}. It directly leads to the preservation of the geometry of the position variable by $\bar{\phi}_3$, i.e., even the input of $\bar{\phi}_3$ has error that $X_0^\top X_0\neq I$, the output always satisfies $X_h^\top X_h = I$, but it will not impair the order of the $\mathcal{O}(h^2)$ local discretization error. See more about its connection to numerically stability to arithmetic errors in Apdx.~\ref{app_benefits_of_X_dagger_step}. 
2) In the first equation of $\bar{\phi}_3$, $X_0+hV_0 X_0^\top X_0$ is used instead of $X_0+hV_0$, even though most of the time  $X_0^\top X_0=I$. This guarantees that even if $X_0^\top X_0\ne I$, the constraint $X_0^\top V_0=0$ itself still leads to $X_h^\top V_h=0$, which improves numerical stability. What's more, when combined with 1), this property will also enable us to use a cheaper $\tilde{\phi}_1$ in the following Step III.
3) The `forward Euler'-like discretization for $\phi_3$ is carefully chosen,and updating $V_h$ using $X_h$ instead of $X_0$, for example, will destroy its structure preservation.
4) No extra handling except forward Euler is applied to the momentum variables $Y,V$; i.e. this discretization itself is beneficial enough to guarantee momentum structure preservation.
\vspace{-0.02in}

\textbf{Step III: relaxation and composition of the operators. } Our goal is to obtain a structure preserving scheme for the original ODE~\eqref{eqn_XYV_dynamics} instead of requiring structure preservation of each operator for the split ODEs \eqref{eqn_split_1}\eqref{eqn_split_2}\eqref{eqn_split_3}. The latter will ensure the former as we have:
\vspace{-0.02in}

\begin{theorem}
\label{thm_structure_preserving_general_composition}
    The composition of any ordering of $\bar{\phi_1},\bar{\phi_2},\bar{\phi_3}$, e.g., $\bar{\phi}_1 \circ \bar{\phi}_2 \circ \bar{\phi}_3$, is a structure-preserving, 1st-order (in $h$) numerical integrator of \eqref{eqn_XYV_dynamics}.
    \hfill \emph{Proof is in Apdx.~\ref{proof_structure_preserving_general_composition}}.
\end{theorem}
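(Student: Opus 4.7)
The plan is to split the theorem into its two assertions, structure preservation and first-order consistency, and dispatch them independently.

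For structure preservation, I would argue that the constraint set $\mathcal{M} := \{(X,Y,V) : X^\top X = I,\ Y + Y^\top = 0,\ X^\top V = 0\}$ is forward-invariant under each $\bar{\phi}_j$ by Theorem \ref{thm_discretizations_structure_preserving}. Since $\mathcal{M}$-invariance is preserved under functional composition (the output of one map is a valid input of the next), any composition $\bar{\phi}_{i_1} \circ \bar{\phi}_{i_2} \circ \bar{\phi}_{i_3}$ sends $\mathcal{M}$ to $\mathcal{M}$. No ordering-specific argument is needed; the conclusion is a one-line corollary of the earlier theorem.

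For first-order accuracy, I would invoke the standard Lie--Trotter splitting framework. The two ingredients are: (i) each $\bar{\phi}_j$ is a first-order approximation of the exact flow map $\phi_j$ of the corresponding split ODE, i.e., $\bar{\phi}_j = \phi_j + \mathcal{O}(h^2)$; and (ii) the vector fields of \eqref{eqn_split_1}, \eqref{eqn_split_2}, \eqref{eqn_split_3} sum to the vector field of \eqref{eqn_XYV_dynamics}. Ingredient (i) can be checked by Taylor expansion: for $\bar{\phi}_3$, expanding $X_\dagger (X_\dagger^\top X_\dagger)^{-1/2}$ in powers of $h$ on $\mathcal{M}$ reproduces $X_0 + hV_0 + \mathcal{O}(h^2)$, matching the exact flow of \eqref{eqn_split_3}; for $\bar{\phi}_2$, comparison with the exact solution (the closed form noted around Eq.~\eqref{eqn_V_sol}) is immediate; for $\bar{\phi}_1$, the formula for $Y_h$ is already the exact solution when $X$ is frozen at $X_0$, and the composition $X_h = X_0 \expm(hY_h)$ agrees with the exact integration of $\dot X = XY$ to first order since $Y_h - Y_0 = \mathcal{O}(h)$. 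Ingredient (ii) then yields, via the Baker--Campbell--Hausdorff / Lie--Trotter estimate, that $\phi_{i_1} \circ \phi_{i_2} \circ \phi_{i_3} = \Phi_h + \mathcal{O}(h^2)$ for every permutation $(i_1,i_2,i_3)$, where $\Phi_h$ is the exact time-$h$ flow of \eqref{eqn_XYV_dynamics}.

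To combine the two ingredients, I would use the smoothness (hence local Lipschitz continuity with constants independent of $h$ for small $h$) of each $\bar{\phi}_j$ on a neighborhood of $\mathcal{M}$ to propagate the $\mathcal{O}(h^2)$ local errors through subsequent compositions without degradation of order; a telescoping estimate of the form $\|\bar{\phi}_{i_1}\bar{\phi}_{i_2}\bar{\phi}_{i_3} - \phi_{i_1}\phi_{i_2}\phi_{i_3}\| \leq \sum_j L^{3-j} \mathcal{O}(h^2) = \mathcal{O}(h^2)$ makes this precise. Combining with ingredient (ii) gives $\bar{\phi}_{i_1} \circ \bar{\phi}_{i_2} \circ \bar{\phi}_{i_3} = \Phi_h + \mathcal{O}(h^2)$.

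The main obstacle I anticipate is the Taylor expansion of $\bar{\phi}_3$ against the exact flow of \eqref{eqn_split_3}: the polar-style factor $(X_\dagger^\top X_\dagger)^{-1/2}$ must be expanded using $X_0^\top V_0 = 0$ and $X_0^\top X_0 = I$ to see the low-order terms cancel and leave only an $\mathcal{O}(h^2)$ discrepancy. Everything else is routine once structure preservation and the three local expansions are in hand.
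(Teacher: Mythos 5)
Your proof is correct and follows essentially the same argument the paper gives, just spelled out in more detail: structure preservation follows immediately from Theorem~\ref{thm_discretizations_structure_preserving} because forward-invariance of the constraint set is closed under composition, and first-order accuracy follows from Lie--Trotter splitting theory plus the fact that replacing each exact split flow $\phi_j$ by its $\mathcal{O}(h^2)$-close approximation $\bar{\phi}_j$ preserves the convergence order (the paper simply cites \cite{mclachlan2002splitting} and \cite{tao2016explicit} for these two facts). Your Taylor-expansion checks of each $\bar{\phi}_j$ and the Lipschitz telescoping estimate are the content behind those citations, so no genuine gap or deviation exists.
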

\vspace{-0.1in}
However, the latter (all operators preserving structure) is not needed for the former (their composition preserves structure). We can still eventually obtain a structure-preserving integrator, however without the costly `expm', by relaxing some of the structure preservation requirements for $\bar{\phi}_1$.
\vspace{-0.03in}
\begin{theorem}
    Consider a consistent approximation of $\bar{\phi}_1$ by $\tilde{\phi}_1$ (i.e. $\tilde{\phi}_1 = \bar{\phi}_1+\mathcal{O}(h^2)$). Assume $\tilde\phi_1$ satisfies that if initially $X_0^\top X_0= I, Y_0^\top+Y_0=0, X_0^\top V_0=0$, then $Y_h^\top+Y_h=0, X_h^\top V_h=0$. Then the specific composition $\bar{\phi}_3 \circ \tilde{\phi}_1 \circ \bar{\phi}_2$ is structure preserving. Moreover, the composition of any ordering of $\tilde{\phi_1},\bar{\phi_2},\bar{\phi_3}$ is a 1st-order (in $h$) integrator of \eqref{eqn_XYV_dynamics}.
    \hfill \emph{Proof is in Apdx. \ref{proof_thm_structure_preserving_approximation}}
    \label{thm_structure_preserving_approximation}
\end{theorem}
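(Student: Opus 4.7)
The plan is to prove the two claims separately, trace the three constraints through the composition for the structure-preservation part, and use standard splitting theory together with the hypothesis $\tilde\phi_1=\bar\phi_1+\mathcal{O}(h^2)$ for the accuracy part.

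For structure preservation of $\bar\phi_3\circ\tilde\phi_1\circ\bar\phi_2$, I would start with input satisfying $X_0^\top X_0=I$, $Y_0^\top+Y_0=0$, $X_0^\top V_0=0$, and propagate the three constraints through each of the three operators in order. After $\bar\phi_2$, all three constraints hold by Theorem~\ref{thm_discretizations_structure_preserving}. After $\tilde\phi_1$, the hypothesis gives $Y^\top+Y=0$ and $X^\top V=0$, while $X^\top X=I$ may now be violated; this is the crucial point where my intermediate state leaves $T\mathsf{St}$. After $\bar\phi_3$, I need to verify all three constraints are restored. Skew-symmetry of $Y$ is immediate because $\bar\phi_3$ leaves $Y$ unchanged. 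For $X^\top X=I$, note that by construction $X_h=X_\dagger(X_\dagger^\top X_\dagger)^{-1/2}$ satisfies $X_h^\top X_h=I$ regardless of the input. The delicate constraint is $X_h^\top V_h=0$: here I would do the short algebraic check using only $X_{\text{in}}^\top V_{\text{in}}=0$ (the input from $\tilde\phi_1$) but without assuming $X_{\text{in}}^\top X_{\text{in}}=I$. Writing $X_\dagger=X_{\text{in}}+hV_{\text{in}}X_{\text{in}}^\top X_{\text{in}}$ and $V_h=V_{\text{in}}-hX_{\text{in}}V_{\text{in}}^\top V_{\text{in}}$, I expand $X_\dagger^\top V_h$; the cross terms cancel precisely because of the extra $X_{\text{in}}^\top X_{\text{in}}$ factor in the $X_\dagger$ update (feature~2 of Step~II), and because $V_{\text{in}}^\top X_{\text{in}}=(X_{\text{in}}^\top V_{\text{in}})^\top=0$ kills the remaining $h^2$ piece. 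Then $X_h^\top V_h=(X_\dagger^\top X_\dagger)^{-1/2}X_\dagger^\top V_h=0$ follows. This is the main obstacle of the proof: making sure the two design features of $\bar\phi_3$ (the polar normalization, and the $X_0^\top X_0$ insertion) are together strong enough to absorb the loss of $X^\top X=I$ incurred by $\tilde\phi_1$.

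For the accuracy claim, I would invoke the standard theory of operator splitting together with consistency. Since $\phi_1,\phi_2,\phi_3$ are the exact flows of the three split vector fields whose sum is the right-hand side of~\eqref{eqn_XYV_dynamics}, any ordering of $\phi_1\circ\phi_2\circ\phi_3$ is a Lie--Trotter splitting and therefore a 1st-order integrator of~\eqref{eqn_XYV_dynamics}. By Theorem~\ref{thm_structure_preserving_general_composition}, each $\bar\phi_j$ satisfies $\bar\phi_j=\phi_j+\mathcal{O}(h^2)$, and by hypothesis $\tilde\phi_1=\bar\phi_1+\mathcal{O}(h^2)=\phi_1+\mathcal{O}(h^2)$. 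Composition of such locally Lipschitz near-identity maps preserves the $\mathcal{O}(h^2)$ local error on a bounded set, so the composition of any ordering of $\tilde\phi_1,\bar\phi_2,\bar\phi_3$ differs from the composition of the corresponding ordering of $\phi_1,\phi_2,\phi_3$ by $\mathcal{O}(h^2)$, and hence from the exact time-$h$ flow of~\eqref{eqn_XYV_dynamics} by $\mathcal{O}(h^2)$, i.e.\ it is a 1st-order integrator.

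Overall, the accuracy part is essentially bookkeeping on top of the splitting framework already invoked in Theorem~\ref{thm_structure_preserving_general_composition}; the substantive content lies in the structure-preservation step, and the main obstacle there is the $X^\top V=0$ verification inside $\bar\phi_3$ when its input has $X^\top X\neq I$, which is exactly why the discretization was designed with the seemingly redundant $X_0^\top X_0$ factor.
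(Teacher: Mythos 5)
Your proof is correct and follows essentially the same route as the paper: you trace the constraints through $\bar\phi_2$, then $\tilde\phi_1$ (where $X^\top X=I$ may be lost but skew-symmetry of $Y$ and $X^\top V=0$ survive by hypothesis), then verify directly that $\bar\phi_3$ restores all three constraints using only $X_{\mathrm{in}}^\top V_{\mathrm{in}}=0$ and the design feature $X_\dagger = X_{\mathrm{in}}+hV_{\mathrm{in}}X_{\mathrm{in}}^\top X_{\mathrm{in}}$, which is exactly the paper's computation $X_\dagger^\top V_h=0\Rightarrow X_h^\top V_h=0$. You also spell out the accuracy part via consistency of near-identity maps in the Lie--Trotter splitting, which the paper leaves implicit as a consequence of its Theorem~\ref{thm_structure_preserving_general_composition}; this is a welcome addition but not a different method.
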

\vspace{-0.1in}

Therefore, our \textbf{default recommendation} is to use $\bar{\phi}_3 \circ \tilde{\phi}_1 \circ \bar{\phi}_2$, where $\tilde{\phi}_1$ is $X_h=X_0+hX_0Y_h$, $Y_h=(1-\gamma h)Y_0-\frac{1-b}{2}h\left(X_0^\top\frac{\partial f}{\partial X_0}-\frac{\partial f}{\partial X_0}^\top X_0\right)$, $V_h=V_0$, mainly approximating the expm in $\bar{\phi}_1$ by forward Euler. The result is concretely summarized in Algo. \ref{algo_St_SGD}. In order to match commonly used notations in machine learning, we rescale the parameters: learning rate $\eta:=\frac{1-\exp(-\gamma h)}{\gamma}h$, momentum parameter $\mu:=\exp(-\gamma h)$, $Z:=Y/\frac{1-\exp(-\gamma h)}{\gamma}$, and $U:=V/\frac{1-\exp(-\gamma h)}{\gamma}$. See Apdx~\ref{app_integrator_in_algorithm} for more information.

\vspace{-0.05in}

A few side remarks are listed in the following. If $m$ is small, then $\bar{\phi}_1 \circ \phi_2 \circ \bar{\phi}_3$ (or any permutation) can also be used, although experimentally no clear advantage was observed (Fig. \ref{fig_metric_inner_prod}); otherwise, the computational cost of $\expm$ can become prohibitive. 
\vspace{-0.05in}

Moreover, the computational cost of matrix inverse square root in the polar retraction $(X_\dagger^\top X_\dagger)^{-1/2}$ should not be a concern since it is only for $m\times m$ matrices instead of $n\times n$ ($n\ge m$). Meanwhile, it can be computed to machine precision rapidly using a quadratically convergent iterative method \citep{higham1997stable}; see Algo.~\ref{algo_mat_root_inv} in Apdx. \ref{app_benefits_of_X_dagger_step} for details, and Apdx.\ref{sec_complexity} for computational complexity.

\vspace{-0.13in}
\begin{algorithm}
    \SetAlgoLined
    \KwHyperparameter{$\eta \in(0, +\infty)$, $\mu\in[0, 1)$, maximum number of iterations $N$}
    \KwInitialization{$X_0$, $U_0$, $Z_0$ s.t. $X_0^\top X_0=I$, $X_0^\top U_0=0$, $Z_0+Z_0^\top=0$
    }

     \For{$i=0,\cdots,N-1$ 
     }{
        Compute `gradients': $f_i=\frac{1-b}{2}\left(X_i^\top\frac{\partial f}{\partial X}(X_i)-\frac{\partial f}{\partial X}^\top(X_i) X_i\right)$; 
        $g_i=(I-X_i X_i^\top)\frac{\partial f}{\partial X}(X_i)$\\
       Update $\bar{\phi}_2$: $U_{i+\frac{1}{2}}=\mu U_i-\frac{3a-2}{2}\eta U_iZ_i- g_i$\\
       Update $\tilde{\phi}_1$:
        $Z_{i+1}=\mu Z_i-f_i$;
        $X_{i+\frac{1}{2}}=X_i+\eta X_iZ_i$\\
       Update $\bar{\phi}_3$: $X_\dagger=X_{i+\frac{1}{2}}+\eta U_{i+\frac{1}{2}}X_{i+\frac{1}{2}}^\top X_{i+\frac{1}{2}}$; Compute $(X_\dagger^\top X_\dagger)^{-\frac{1}{2}}$ using Algo.~\ref{algo_mat_root_inv};
        $X_{i+1}=X_\dagger(X_\dagger^\top X_\dagger)^{-\frac{1}{2}}$; 
        $U_{i+1}=U_{i+\frac{1}{2}}-\eta X_{i+\frac{1}{2}} U_{i+\frac{1}{2}}^\top U_{i+\frac{1}{2}}$\\

    }
         \Return{$X_N$}
     \caption{Momentum (S)GD on $\mathsf{St}(n,m)$ (SGD: $\partial f/\partial X$ replaced by a stochastic estimator)}

  \label{algo_St_SGD}
\end{algorithm}


\vspace{-0.1in}
\subsection{An adaptive learning rate version}
\vspace{-0.05in}

Tuning for the best SGD learning rate can be labor intensive and computationally unafforable, and sometimes SGD even performs worse than adaptive methods \citep{zhang2020adaptive}. Thus in this section, we propose an adaptive version of our Stiefel optimizer. More precisely, we will establish, as an example, a Stiefel version of Adam \citep{kingma2015adam}, which estimates the 1st and 2nd moments of gradients to obtain element-wise adaptive step sizes. 

\vspace{-0.03in}
The algorithm is established via the following ideas. The `gradient' in this Adam-version method is constructed from our Stiefel SGD with momentum (Alg.\ref{algo_St_SGD}) where the `gradients' in $Y/Z$ and $V/U$ direction can be interpreted as $\frac{1-b}{2}\big(X^\top\derative-\derative^\top X\big)$ and $(I-X(X^\top X)^{-1}X^\top)\derative$ respectively. The main difficulty of extending Stiefel SGD to Stiefel Adam that does not appear in Euclidean case is that element-wise operation on momentum loses tangent vector structure. We solve this respectively: (1) For $Y/Z$-direction, the skew-symmetry is preserved after a symmetric element-wise operation $Z\oslash (p^{\circ \frac{1}{2}}+\epsilon)$.
(2) For $V/U$-direction, we apply a projection $I-X(X^\top X)^{-1}X^\top$ to the element-wisely rescaled momentum $U\oslash(q^{\circ \frac{1}{2}}+\epsilon)$, making sure  `$X^\top V=0$'. Combining all the above, we obtain the Adam-Stiefel optimizer. Denote $\hat{\phi_1},\hat{\phi_2},\hat{\phi_3}$ to be the modification of $\phi_1,\phi_2,\phi_3$ in the Adam version (see detailed expressions in Apdx.~\ref{app_integrator_adam}). Then the integrator is defined as  $\hat{\phi}_3\circ\hat{\phi}_1\circ\hat{\phi}_2$. 
The overall method is shown in Algo.~\ref{algo_St_Adam}. 

\vspace{-0.05in}
\begin{algorithm}
    \SetAlgoLined
    \KwHyperparameter{$\eta\in(0, +\infty)$, $\beta_1\in [0, 1)$, $\beta_2\in [0, 1)$, $0<\epsilon\ll 1$, number of iterations $N$}
    \KwInitialization{$X_0$, $V_0$, $Y_0$, $p_0$, $q_0$ s.t. $X_0^\top X_0=I$, $X_0^\top U_0=0$, $Z_0+Z_0^\top=0$, $p_0=p_0^\top$}
     \For{$i=0,\cdots,N-1$}{
        Compute `gradients': $f_i=\frac{1-b}{2}\big(X_i^\top\frac{\partial f}{\partial X}(X_i)-\frac{\partial f}{\partial X}(X_i)^\top X_i\big)$; 
        $g_i=(I-X_i X_i^\top)\frac{\partial f}{\partial X}(X_i)$\\
        2nd-moment estimation: $p_{i+1}=\beta_2 p_i+(1-\beta_2)f_i^{\circ 2}$; 
        $q_{i+1}=\beta_2 q_i+(1-\beta_2)g_i^{\circ 2}$\\
        Update $\hat{\phi}_2$: $U_{i+\frac{1}{2}}=\beta_1 U_i-\frac{3a-2}{2}\eta U_iZ_i-(1-\beta_1)g_i$\label{algo_st_adam_step_V}\\
        Update $\hat{\phi}_1$: $Z_{i+1}=\beta_1 Z_i-(1-\beta_1)f_i$\label{algo_st_adam_step_Y};      $X_{i+\frac{1}{2}}=X_i+\eta\sqrt{1-\beta_2^{i+1}} X_i\left(Z_{i+1}\oslash (p_{i+1}^{\circ \frac{1}{2}}+\epsilon)\right)$
        \label{algo_st_adam_step_XY}\\
        Update $\hat{\phi}_3$: $\tilde{U}=\sqrt{1-\beta_2^{i+1}}(I-X_{i+\frac{1}{2}}(X_{i+\frac{1}{2}}^\top X_{i+\frac{1}{2}})^{-1}X_{i+\frac{1}{2}}^\top)(U_{i+\frac{1}{2}}\oslash(q_{i+1}^{\circ \frac{1}{2}}+\epsilon))$\label{algo_st_adam_step_V_tilde}; 
        $X_\dagger=X_{i+\frac{1}{2}}+\eta\tilde{U}X_{i+\frac{1}{2}}^\top X_{i+\frac{1}{2}}$\label{algo_st_adam_step_dagger}; 
        $X_{i+1}=X_\dagger(X_\dagger^\top X_\dagger)^{-\frac{1}{2}}$\label{algo_st_adam_step_XV}; 
        $U_{i+1}=U_{i+\frac{1}{2}}-\eta X_{i+\frac{1}{2}} \tilde{U}^\top U_{i+\frac{1}{2}}$\label{algo_st_adam_step_VX}\\
    }
     \Return{$X_N$}
     \caption{Adam on Stiefel manifold $\mathsf{St}(n,m)$ }
     \label{algo_St_Adam}
\end{algorithm}
\vspace{-0.01in}

\begin{theorem}
    \label{thm_adam_structure_preserving}
The Adam-Stiefel $\hat{\phi}_3\circ\hat{\phi}_1\circ\hat{\phi}_2$ is structure-preserving.
\hfill\emph{Proof is in Apdx.~\ref{app_proof_thm_adam_structure_preserving}.}
\end{theorem}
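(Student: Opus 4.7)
My plan is to imitate the structure-preservation argument used for the SGD version (Theorems \ref{thm_discretizations_structure_preserving} and \ref{thm_structure_preserving_approximation}), carefully tracking the three invariants $X^\top X=I$, $Z+Z^\top=0$, and $X^\top U=0$ through each of $\hat\phi_2$, $\hat\phi_1$, $\hat\phi_3$ in the composition order, and additionally checking the auxiliary invariants $p=p^\top$ and $q=q^\top$ on the second-moment estimates that are needed to make the element-wise rescalings safe. I would proceed operator by operator under the inductive hypothesis that all invariants hold at iterate $i$.

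First, I would verify the two auxiliary facts that make the Adam-style rescaling compatible with the geometry: (i) since $f_i=\tfrac{1-b}{2}(X_i^\top\partial_X f-\partial_X f^\top X_i)$ is skew-symmetric, $f_i^{\circ 2}$ is entry-wise symmetric, so $p_{i+1}=\beta_2 p_i+(1-\beta_2)f_i^{\circ 2}$ preserves $p^\top=p$, and (ii) for any skew-symmetric $Z$ and symmetric $P$ with positive entries in $P+\epsilon$, $Z\oslash(P+\epsilon)$ is still skew-symmetric. With these in hand, $\hat\phi_2$ is immediate: $X$ and $Z$ are untouched, and $X_i^\top U_{i+1/2}=\beta_1 X_i^\top U_i-\tfrac{3a-2}{2}\eta(X_i^\top U_i)Z_i-(1-\beta_1)X_i^\top g_i=0$ because $X_i^\top U_i=0$ and $g_i=(I-X_iX_i^\top)\partial_X f$ annihilates $X_i^\top$.

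Next I would handle $\hat\phi_1$. Since $U$ is unchanged, it suffices to check the $Z$ and $X$ updates. Skew-symmetry of $Z_{i+1}=\beta_1 Z_i-(1-\beta_1)f_i$ is immediate by linearity. For $X_{i+1/2}=X_i+\eta\sqrt{1-\beta_2^{i+1}}X_i W$ with $W:=Z_{i+1}\oslash(p_{i+1}^{\circ 1/2}+\epsilon)$, note $W$ is skew-symmetric by the fact established above, so $W^\top=-W$ and $X_{i+1/2}^\top U_{i+1/2}=(I+\eta\sqrt{\cdot}\,W^\top)X_i^\top U_{i+1/2}=0$, using that $X_i^\top U_{i+1/2}=0$ was already secured in $\hat\phi_2$. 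The constraint $X^\top X=I$ is not preserved by $\hat\phi_1$; that will be restored only by the polar retraction in $\hat\phi_3$, exactly as in the non-adaptive case.

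The main obstacle is $\hat\phi_3$, where the $\tilde U$ projection and the polar retraction must cooperate. The strategy is to first observe that by construction $X_{i+1/2}^\top\tilde U=0$, hence $\tilde U^\top X_{i+1/2}=0$; this lets me compute $X_\dagger^\top X_{i+1/2}=X_{i+1/2}^\top X_{i+1/2}$ and $X_\dagger^\top X_\dagger=X_{i+1/2}^\top X_{i+1/2}+\eta^2(X_{i+1/2}^\top X_{i+1/2})\tilde U^\top\tilde U(X_{i+1/2}^\top X_{i+1/2})$. The polar step $X_{i+1}=X_\dagger(X_\dagger^\top X_\dagger)^{-1/2}$ then gives $X_{i+1}^\top X_{i+1}=I$ by the standard polar identity. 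For the subtle part, $X_{i+1}^\top U_{i+1}=0$, I would compute $X_\dagger^\top U_{i+1}=X_\dagger^\top U_{i+1/2}-\eta X_\dagger^\top X_{i+1/2}\tilde U^\top U_{i+1/2}$ and use $X_\dagger^\top U_{i+1/2}=X_{i+1/2}^\top U_{i+1/2}+\eta(X_{i+1/2}^\top X_{i+1/2})\tilde U^\top U_{i+1/2}$ together with $X_\dagger^\top X_{i+1/2}=X_{i+1/2}^\top X_{i+1/2}$ so that the two $\eta$-terms cancel exactly, leaving $X_\dagger^\top U_{i+1}=X_{i+1/2}^\top U_{i+1/2}=0$. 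Left-multiplying by $(X_\dagger^\top X_\dagger)^{-1/2}$ yields $X_{i+1}^\top U_{i+1}=0$. This cancellation is precisely what motivated the design of $X_\dagger$ with the trailing $X_{i+1/2}^\top X_{i+1/2}$ factor and of $U_{i+1}$ with the matching correction; I expect it to be the only nontrivial algebraic identity in the whole argument. Combining the three operator analyses completes the induction and hence the theorem.
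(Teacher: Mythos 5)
Your proof is correct and follows essentially the same operator-by-operator induction as the paper's Appendix proof, checking $X^\top X=I$, $Z+Z^\top=0$, $X^\top U=0$, and $p=p^\top$ through $\hat\phi_2,\hat\phi_1,\hat\phi_3$ and isolating the same key cancellation $X_\dagger^\top U_{i+1}=X_{i+1/2}^\top U_{i+1/2}=0$ at the $\hat\phi_3$ step. One small point: checking $q=q^\top$ is unnecessary (and generically not even meaningful, as $q$ is $n\times m$ rather than square); symmetry is only needed for $p$, to keep $Z\oslash(p^{\circ 1/2}+\epsilon)$ skew-symmetric, while the $U$-direction is handled by the explicit projection $I-X(X^\top X)^{-1}X^\top$ rather than by any symmetry of $q$.
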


\vspace{-0.2in}

\section{Experiments}
\label{sec_experiment}
\vspace{-0.12in}

This section demonstrates our Stiefel optimizers 
on Projection Robust Wasserstein Distance \citep{paty2019subspace,lin2020projection} and trained-from-scratch Vision Transformer \citep{dosovitskiy2020image}. They will also be compared with other popular Stiefel optimizers summarized in Tab.~\ref{tab_comparison}. Canonical metric (i.e. $a=1/2$ in Eq.\ref{eqn_metric_e}) is used in both examples to show that the gained performance is due to algorithmic innovation but not an extra tuned knob. An additional experiment on leading eigenvalue problem is deferred to Apdx.~\ref{app_sec_GEV}, where we also compare the convergence rates and time consumptions of different algorithms, test various choices of metrics, and study why our retraction is tailored to our algorithm. Good performance is observed in all three examples. 

More experimental details are in Apdx.~\ref{sec:ExperimentalDetails}.

\vspace{-0.1in}

\subsection{Projection Robust Wasserstein Distance}
\label{sec_PRW}
\vspace{-0.1in}

Projection Robust Wasserstein Distance (PRW) \citep{paty2019subspace,lin2020projection} is a notion recently proposed to improve the robustness of standard Wasserstein metric, especially in high-dim settings. The idea is to simultaneously look for a best projection from high-dim data ($x_i,y_j$, respectively with weights $r_i,c_j$) to low-dim ones, and an entropic regularized optimal transport plan between projected data, i.e.
\vspace{-0.15in}
$$    
    \max\limits_{U \in \mathsf{St}(d, k)} \quad \min\limits_{\pi \in \mathbb{R}^{n\times n}_+, \sum_j \pi_{ij}=r_i, \sum_i \pi_{ij}=c_j} \sum_{i=1}^n \sum_{j=1}^n \pi_{i, j} \|U^\top x_i - U^\top y_j\|^2 + \eta \langle \pi, \log(\pi) - 1_n1_n^\top\rangle
$$
\vspace{-0.2in}

This problem is geodesically-nonconvex w.r.t. the Stiefel variable $U$\citep{jordan2022first}, thus computationally extra-challenging to Stiefel optimizers.  \citet{lin2020projection} proposed an effective method based on alternations between a full Sinkhorn step \citep{cuturi2013sinkhorn}, given the current projection, and an optimization step that improves the projection. In particular, they developed Riemmanian optimizer with projection and retraction (RGAS) and its adaptive learning rate version (RAGAS). We replace RGAS and RAGAS by our optimizer and others, and test on the hardest experiments in \cite{lin2020projection}, namely MNIST and Shakespeare.

\vspace{-0.04in}

Results are in Fig.\ref{fig_PRW}. Our method is observed to find the largest value of PRW and thus the best projection among the tested, which implies the best performance.
Details of setup are in Apdx.~\ref{app_experiment_PRW}.
\begin{figure}[!]
    \centering
    \vspace{-0.15in}
    \includegraphics[width=\textwidth]{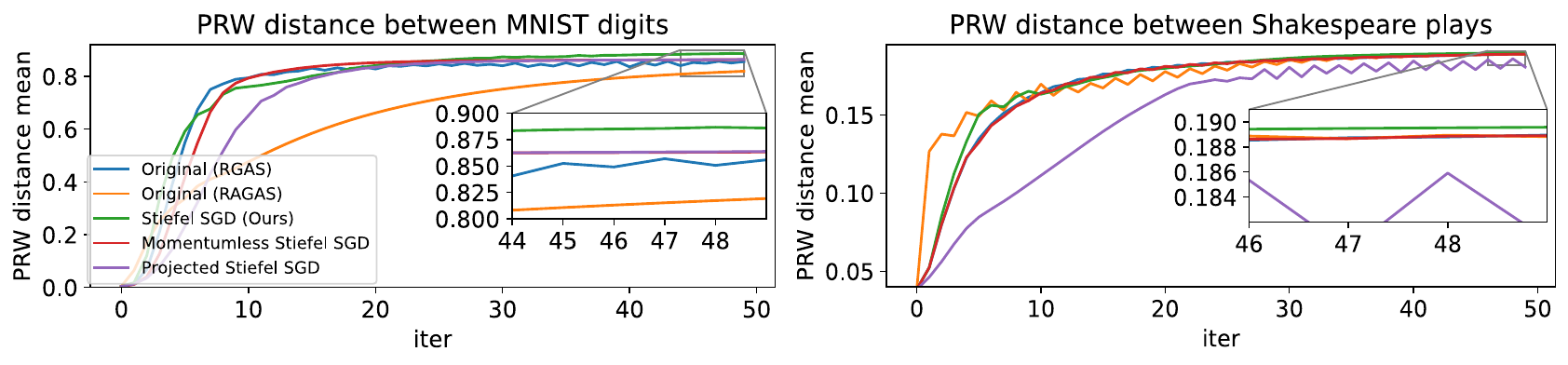}
    \vspace{-0.3in}
    \caption{Projection Robust Wasserstein Distance (PRW) tested on MNIST and Shakespeare plays. Data points are features extracted by a pre-trained model. The mean optimal transport value is taken among all digits or movie pairs; larger mean optimal transport value means more effective orthogonal projection. Our method makes PRW more effective by getting the best local minimum (largest optimal transport value) and fast convergence.}
    \vspace{-0.1in}

    \label{fig_PRW}
        \vspace{ -0.12in}
\end{figure}

\vspace{-0.11in}

\subsection{How could orthogonality improve vanilla Vision Transformer (ViT)?}
\label{sec_ViT}
\vspace{-0.11in}

This section explores the possibility of making self-attention in Transformer models \citep{vaswani2017attention} orthogonal. The intuition is, if we interpret each attention head as a characterization of interactions between tokens, maybe linearly dependent interactions shouldn't be double-counted so that the characterization can be more efficient and robust. In addition, there is unlikely just one way in which tokens relate to each other, and multiple heads handle different characterizations in parallel. So our second question is, should different heads also be orthogonal to each other?
To formulate precisely, we follow the notation of \cite{vaswani2017attention} and only review relevant parts: a Scaled Dot-Product Multihead Attention is given by
$
    \text{MultiHead}(Q,K,V)=\text{Concat}(\text{head}_1,..., \text{head}_{n_\text{head}})W^O
$, where
$
    \text{head}_i = \text{Attention}(QW^Q_i, KW^K_i, VW^V_i)
$, 
$
    \text{ Attention}(\tilde{Q}, \tilde{K}, \tilde{V}) = \text{softmax}(\frac{\tilde{Q}\tilde{K}^\top}{\sqrt{d_k}})\tilde{V},
$
matrices $W^Q_i\in \mathbb{R}^{d_\text{model}\times d_k}$, $W^K_i\in \mathbb{R}^{d_\text{model}\times d_k}$, $W^V_i\in \mathbb{R}^{d_\text{model}\times d_v}$ and $W^O\in \mathbb{R}^{n_\text{head} d_v\times d_\text{model}}$ correspond to trainable parameters. The three input matrices $Q$, $K$ and $V$ all have dimension $sequence\_length\times d_\text{model}$. $d_k$ and $d_v$ are usually smaller than $d_\text{model}$. 
\vspace{-0.02in}

For orthogonality only \textbf{within} head, we require that $W^Q_i$, $W^K_i$ are in $\mathsf{St}(d_\text{model}, d_k)$. This needs $d_\text{model} \ge d_k$, which holds in most cases. For orthogonality \textbf{across} heads, we need $d_\text{model}\ge n_\text{head} d_k$, which is satisfied in many popular models, and require $\text{Concat}(W^Q_i,i=1...,n_\text{head})$, $\text{Concat}(W^K_i,i=1...,n_\text{head})$ to be in $\mathsf{St}(d_\text{model}, n_\text{head}d_k)$, which means it contains not only `orthogonality only \textbf{within} head', but also extra cross-head orthogonality.

\vspace{-0.02in}
We test these two types of constraints on vanilla ViT \citep{dosovitskiy2020image}, trained from scratch. Results summarized in Tab. \ref{tab_ViT} (and Fig. \ref{fig_ViT} in Apdx.~\ref{app_experiment_ViT}) show: (1) requiring each head to be orthogonal just within itself leads to the most significant improvement of performance; (2) additionally requiring heads to be orthogonal to each other will actually be less effective, although that is still better than requiring no orthogonality; (3)  the choice of optimizer matters: our methods give the best results and out-perform non-constrained baselines in all cases, but not all existing Stiefel optimizers can beat non-constrained baselines in all cases;
(4) momentum significantly helps train orthogonal ViT; (5) simply imposing orthogonal constraint (with our optimizer) makes ViT outperform some carefully designed models of the same size that are trained from scratch (Tab. A3 in \cite{zhang2022nested}). Note models trained from scratch are very different from pre-trained ViT. To have some realistic expectation of the performance of trained-from-scratch ViT, recall, for example, an improved ViT, PVT-T \citep{wang2021pyramid}, has 9.49\% and 30.38\% error on CIFAR 10 and CIFAR 100, using 12.8M parameters, and another improvement, DeiT-T \citep{touvron2021training}, achieves 11.62\% and 32.48\% respectively with 5.3M parameters. 
Our model in Tab.\ref{tab_ViT} (and Fig.\ref{fig_ViT} in Apdx.~\ref{app_experiment_ViT}) uses 6.3M parameters. More details about this experiment can be found in Apdx. \ref{app_experiment_ViT}.
 \vspace{-0.02in}

\begin{remark}[No additional hyperparameters needed for the Stiefel part]
    Our Stiefel-ViT model has both non-Euclidean and Euclidean parameters, the latter due to the reason that we don't impose orthogonality on feedforward layers and $W_i^V$. Our optimizer can use the same learning rate for both sets of parameters. This is because the optimizer is the time discretization of ODEs, and synchronous discretization is sufficient for convergence. 
    This contrasts with other approaches such as projected Steifel SGD/Adam \citep{li2020efficient}, where learning rates for these two sets of parameters need to be adjusted separately and could differ by 40 times in some tasks. See Apdx.~\ref{app_experiment_discussion} for more discussion. 
\end{remark}

\vspace{-1in}
\begin{table}
{\hspace{-0.45cm}
\begin{tabular}{ |p{2.5cm}|p{7.1 cm}|p{1.8cm}|p{1.85cm}|  }
 \hline
 \multicolumn{4}{|c|}{ViT trained from scratch (6.3M parameters) test error (mean $\pm$ std of 5 tests, in percentage)} \\
 \hline
&Method& CIFAR 10 & CIFAR 100\\
 \hline
 \multirow{ 2}{4cm}{Orthogonality \\across heads} &\textbf{Stiefel SGD} (ours, same as \cite{tao2020variational} in this case)&\textcolor{blue}{8.85}   $\pm$ 0.35&\textcolor{blue}{31.51} $\pm$ 0.39\\
 &Projected Stiefel SGD\citep{li2020efficient}&9.65 $\pm$ 0.23 & 32.14 $\pm$ 0.48\\
 &Regularizer SGD &9.58 $\pm$ 0.21&32.77 $\pm$ 0.23\\
 \cline{2-4}
 & \textbf{Stiefel Adam} (ours)&\textcolor{blue} {8.81} $\pm$ 0.22&\textcolor{blue}{32.48}  $\pm$ 0.24\\
 &Projected Stiefel Adam\citep{li2020efficient}&11.58 $\pm$ 0.24&33.98 $\pm$ 0.12\\
 &Regularizer Adam &11.09 $\pm$ 0.26&33.90 $\pm$ 0.24\\
 \hline
 \multirow{ 2}{4cm}{Orthogonality \\only within \\ each head}
 &\textbf{Stiefel SGD} (ours)&\textcolor{blue} {\underline{8.32}} $\pm$ 0.39&\textcolor{blue}{\underline{30.20}}  $\pm$ 0.55\\
 &Projected Stiefel SGD \citep{li2020efficient}&8.90 $\pm$ 0.18&31.71  $\pm$ 0.26\\
 &Regularizer SGD& 9.78 $\pm$ 0.14&33.08 $\pm$ 0.42\\
 &Momentumless Stiefel SGD \citep{wen2013feasible} $^\diamond$ &12.90 $\pm$ 0.22 & 34.84 $\pm$ 0.24\\
 \cline{2-4}
 &\textbf{Stiefel Adam} (ours)&\textcolor{blue}{8.46} $\pm$ 0.20&\textcolor{blue}{31.04}  $\pm$ 0.35\\
 & Projected Stiefel Adam \citep{li2020efficient}&10.77 $\pm$ 0.30&33.73 $\pm$ 0.54\\
 &Regularizer Adam&11.53  $\pm$ 0.28&34.72 $\pm$ 0.67\\
 \hline
 \multirow{ 2}{4cm}{No constraints \\ (\textcolor{green}{baseline})}
 &SGD &\textcolor{green}{9.75} $\pm$ 0.37 & \textcolor{green}{32.61} $\pm$ 0.30\\
 & Adam & \textcolor{green}{9.52} $\pm$ 0.32 & \textcolor{green}{33.15} $\pm$ 0.50\\
 & AdamW\citep{loshchilov2017decoupled} & \textcolor{green}{9.05} $\pm$ 0.43 & \textcolor{green}{33.11} $\pm$ 0.47\\
 \hline
  \multirow{ 2}{4cm}{Advanced methods\\with known error}
 &PVT-T (12.8M parameters) \cite{wang2021pyramid} &9.49& 30.38\\
 &DeiT-T (5.3M parameters) \cite{touvron2021training} & 11.62 & 32.48\\
  \hline
\end{tabular}
}
\footnotesize $\diamond$: only tested for `within each head' because it becomes ortho. SGD \citep{tao2020variational} with 0 momentum for `across head'.
\vspace{-0.1in}

  \caption{Orthogonal Vision Transformer (Stiefel-ViT) trained from scratch for CIFAR. For each of the 4 classes (\{across, only within\}$\times$\{SGD,Adam\}), blue is best within that class. Underlined is the best over all classes, and bold faced are methods from this paper.}
 \vspace{-0.15in}
 \label{tab_ViT}
 \vspace{-5pt}
\end{table}

\newpage
\section*{Acknowledgments}
We thank Tomoki Ohsawa for insightful discussion. We are grateful for partial support by NSF DMS-1847802 (LK, YW and MT), NSF ECCS-1936776 (MT), and Cullen-Peck Scholar Award (LK, YW and MT).

\bibliography{ref}
\bibliographystyle{unsrtnat}
\clearpage
\newpage

\appendix
\paragraph{Notation} For simplicity, we denote $G:=\frac{\partial f}{\partial X}$ when there is no confusion.

\section{A quick summary of optimizers we experimentally compared to in this paper}
    \begin{table}[h]
    \centering

    {\begin{tabular}{|p{0.11\textwidth}|m{0.2\textwidth}|p{0.6\textwidth}|}
    \hline
   
    manifold preserving&Optimizer&Pros and Cons\\ \hline
    \centering $\cmark$ &Stiefel SGD/Adam (Ours)&
    
    \begin{minipage} [t] {0.6\textwidth}
     \vskip -0.14in
      $\cmark$ allows a family of metrics\\
      $\cmark$ fast iterative method for position retraction\\
      $\cmark$ momentum needs no retraction \\
      $\cmark$ learning rate can be made adaptive \\
      $\cmark$ same learning rate for Stiefel \& Euclidean parameters
    \end{minipage} \\ \hline
    \centering $\cmark$ &Projected Stiefel SGD/Adam \citep{li2020efficient}& 
    \begin{minipage} [t] {0.6\textwidth}
    \vskip -0.24in
      $\cmark$ learning rate can be made adaptive \\
      $\xmark$ slower iterative method for position retraction; fast if early stopped but then manifold preservation is lost\\
      $\xmark$ momentum needs projection every iteration\\
      $\xmark$ different learning rates for Stiefel \& Euclidean parameters
    \end{minipage} \\ \hline
    \centering $\cmark$ &Momentumless Stiefel (S)GD~\citep{wen2013feasible} & \begin{minipage} [t] {0.6\textwidth}
     \vskip -0.16in
      $\xmark$ no momentum
    \end{minipage} \\ \hline
    \centering $\xmark$ & Regularizer \citep{cisse2017parseval} & \begin{minipage} [t] {0.6\textwidth}
    \vskip -0.16in
    
      $\cmark$ fast computation \\
      $\xmark$ sensitive to regularization strength
    \end{minipage} \\ \hline
    \end{tabular}}
    \caption{A summary of pros and cons of existing optimizers.}
    \label{tab_comparison}
\end{table}

\section{Technical Remarks on Geometric Mechanics}

\subsection{Intuition of the cotangent bundle}
    If we require exact Stiefel manifold preservation for all time, i.e., $X(t)^\top X(t)=I$, a simple time differentiation gives a 2nd-order constraint $X(t)^\top Q(t)+Q(t)^\top X(t)=0$ for $Q=\dot{X}$. This is why momentum $Q$ has the additional structure $Q(t)\in T_{X(t)}^*\mathsf{St}$.

\subsection{What Exactly is Momentum?}
\label{sec:tangentVsCotangent}
Through this paper, we abused notation and called $Q:=\dot{X}$ momentum, following the common practice of the community. It really should be called velocity instead, because although in canonical Euclidean spaces this doesn't make much a difference, for mechanical systems on manifolds they are not the same thing. For example, for our case, the geometrically intrinsic momentum variable should be given by Legendre transform
\begin{equation}
    P:=\frac{\partial L}{\partial \dot{X}}=r(I-aXX^\top)\dot X
    \label{eq:ajdksghoiui3r1bgtoiy13oir3bg}
\end{equation}
instead, and velocity $Q$ is in the tangent space while momentum $P$ is in the cotangent space. In the paper we used the identity map for an isomorphism between the tangent and cotangent spaces, but if we'd like to make our variational formulation more elegant by viewing the kinetic energy as a natural pairing between the velocity and momentum variables, then the isomorphism should be given by the metric as in \eqref{eq:ajdksghoiui3r1bgtoiy13oir3bg}.

None of these affect the correctness or the efficacy of results in this paper. This discussion is just about terminology.

\subsection{The special case of $\mathsf{St}(n,n)$ and its relation with $\mathsf{O}(n)$ and $\mathsf{SO}(n)$}
\label{app_Stnn}
    When $n\ge m$, $\mathsf{St}(n,m)\cong \mathsf{O}(n)/\mathsf{O}(n-m)$; when $n>m$ strictly, we also have $\mathsf{St}(n,m)\cong \mathsf{SO}(n)/\mathsf{SO}(n-m)$. However, in the special case of $n=m$, $\mathsf{St}(n,n)\cong \mathsf{O}(n)$ is not connected unlike the $n>m$ cases. Since our optimizer is based on the discretization of continuous dynamics, it cannot make jumps and thus just optimizes on a connected component of $\mathsf{St}(n,m)$, which means for the special case of $n=m$, it is, to be precise, optimizing on $\mathsf{SO}(n)$ but not $\mathsf{O}(n)$, although a similar complication is nonexistent when $n>m$.

\section{Details about the per-iteration complexity and computational cost for our algorithms}
\label{sec_complexity}


The most costly operation in our algorithms is the $n\times m$ matrix multiplication (note $n>m$). The computation for the matrix exponential and square root of matrix inversion, is cautiously designed to only deal with matrices of dimension $m\times m$ (see Apdx.~\ref{app_benefits_of_X_dagger_step}\ref{app_simplify_matrix_exponential}), and thus admits at most $\mathcal{O}(m^3)$ at each step (particularly, forward Euler only has the complexity of $\mathcal{O}(m^2)$ while Cayley map is $\mathcal{O}(m^3)$). For the inner loop of the square root of matrix inversion (Algo.~\ref{algo_mat_root_inv}), due to its quadratic convergence \citep{higham1997stable}, it takes $\mathcal{O}(\log\log(1/\textbf{u}))$ number of steps to achieve the machine precision $\textbf{u}$. Note early stopping can be applied to this loop so that the complexity can be further reduced while the order of the overall method (1st-order) is still maintained. The combination of all the above gives complexity $\mathcal{O}(nm^2)+\mathcal{O}(m^3 \log\log(1/\textbf{u}))$\footnote{$\textbf{u}$ stands for machine precision, which is a very small number}.

Commonly used optimizers on Stiefel manifold (e.g. Tab. \ref{tab_comparison}) also have per-iteration complexity of $\mathcal{O}(nm^2)$. Although it is difficult to compare them with our method in terms of the constant prefactors in respective complexity bounds, we can do some heuristics to estimate these prefactors, which will demonstrate the low-cost advantage of our method. 
For our \texttt{Stiefel SGD} (Algo. \ref{algo_St_SGD}), we count the numbers of matrix multiplications needed per iteration, each of which costs $nm^2$. Namely, step 2: 2 multiplications; step 3: 1; step 4: 1; step 5: 6. In step 6 we need to call Algo. \ref{algo_mat_root_inv} for computing matrix root inversion, and a closer look gives 3 $m^3$-cost matrix multiplications in each iteration of Algo. \ref{algo_mat_root_inv}. Since Fig. \ref{fig_manifold_matters}(c) shows that 8 inner iteration is enough for this inner loop to converge under double precision, in total, 10 $nm^2$-cost matrix multiplications and 24 $m^3$-cost matrix multiplications are needed in each (outer) iteration of our optimizer (Algo. \ref{algo_St_SGD}). 
We also count the number of matrix multiplications needed for other aforementioned algorithms. For \texttt{Momentumless Stiefel (S)GD} \citep{wen2013feasible}, the smartly designed Cayley map retraction takes about 10 times of $nm^2$-cost matrix multiplications and one inversion for a $(2m)\times(2m)$-sized matrix in each iteration. In comparison, \texttt{Projected Stiefel SGD}  \cite{li2020efficient} needs 9 $nm^2$-cost matrix multiplications for projecting momentum and 6 $nm^2$-cost matrix multiplications each iteration of Cayley loop. Since 8 iterations are needed to compute the Cayley transform to single precision (see fig. \ref{fig_manifold_matters}; note it is just single precision instead of double precision by our inner loop), a total of 57 $nm^2$-cost matrix multiplications are needed per outer iteration.

The above estimation is just for the part on maintaining the structure of the position variable. Let's now also discuss the momentum variable. Our carefully designed way of introducing momentum bypasses the costly moving momentum while keeping it in tangent spaces. Note \texttt{Projected Stiefel SGD} first moves momentum in the Euclidean space and then uses a cleverly designed projection, which markedly improves computational efficiency, but it still devotes 9 matrix multiplications to just the momentum projections, let alone the fact that needing projection means less accuracy. 
To add to this discussion, parallel transport is another existing way of moving momentum, but it cannot even be solved with $\mathcal{O}(nm^2)$ complexity. 

Altogether, the above estimations heuristically show that our prefactor is close to that in \texttt{Momentumless Stiefel (S)GD} but much smaller than those in existing methods that have momentum. This matches the experimental evidence in Fig. \ref{fig_LEV}. 


    
\section{Terminologies of Riemannian optimization}
\label{app:related_work}
In each iteration of optimization, gradient needs to be a vector in tangent space, and the operation that maps nonintrinsic gradient to the tangent space is called projection. Then, the variable goes one step on manifold given the tangent vector, ideally via the exponential map, which is, however, expensive or even not admitting a closed form computation, and  approximation known as retraction is used.

When momentum, a tangent vector, is involved, at the same time we update the position, tangent space is also changed. An intrinsic way to move tangent vectors is by parallel transport, which is an isomorphism of the tangent spaces of two points connected by a geodesic.

\section{About the $Y,V$ decomposition}
\subsection{The geometric version and the dynamical version of their structural constraints}
\label{app_derivation_Y_V_decomposition}

A tangent vector $Q\in T_X\mathsf{St}$ can be decomposed, as studied in \cite{edelman1998geometry,absil2009optimization}, into
\begin{align}
    Q=\begin{pmatrix}X & X^\perp\end{pmatrix} \begin{pmatrix}
    Y\\Z
    \end{pmatrix}=XY+X^\perp Z,
\end{align}
where $X\in \mathsf{St}(n,m), Y\in \mathbb{R}^{m\times m}, Z\in \mathbb{R}^{(n-m)\times m}$; $X^\perp\in \mathbb{R}^{n\times (n-m)}$ is a matrix in the orthogonal complement of $X$, i.e., $X^\top\cdot X^\perp=\textbf{0}_{m\times (n-m)}$. Let $V=X^\perp Z$. We have $X\perp V$ and 
\begin{equation}
        \left\{
    \begin{aligned}
        Y=&X^\top Q\\
        V=&Q-XY=(I-XX^\top)Q.
    \end{aligned}
    \right.
    \label{eqn_YV}
    \end{equation}

The above representation immediately implies $Y$ is a skew-symmetric matrix since
\begin{align}
    X^\top Q+Q^\top X=0\Rightarrow Y^\top=-Y.
\end{align}

In our derivation, however, $X$ and $Q$ are variables (they really should be $X(t) \in \mathsf{St}, Q(t) \in T_{X(t)}\mathsf{St}$) that change with time, and this poses a new challenge. More precisely, given the $Q$ dynamics, i.e. $\dot{Q}$, there could be infinitely many ways of assigning $\dot{Y}$ and $\dot{V}$ so that they together produce the given $\dot{Q}$; however, in general they do not maintain the tangent decomposition structures. More precisely, starting with 
$$Y(0)^\top+Y(0)=0 \text{ and } X(0)^\top V(0)=0$$ 
(and hence $X(0)^\top Q(0)+Q(0)^\top X(0)=0$), one may not have 
$$Y(t)^\top+Y(t)=0 \text{ and } X(t)^\top V(t)=0 \text{ for } t>0$$ 
despite that $X(t)^\top Q(t)+Q(t)^\top X(t)=0$ will still be guaranteed. However, we have found a nontrivial choice of $\dot{Y}$ and $\dot{V}$ (Eq.\ref{eqn_XYV_dynamics}), so that the (static) geometric constraint becomes dynamically true, i.e. $Y(t)^\top+Y(t)=0$ and $X(t)^\top V(t)=0$ for all $t$. Therefore we can simply maintain the decomposition
\[
        \left\{
    \begin{aligned}
        Y(t)=&X(t)^\top Q(t)\\
        V(t)=&Q-XY=(I-X(t)X(t)^\top)Q(t).
    \end{aligned}
    \right.
\]

\subsection{The advantage of the $Y,V$ decomposition}
\label{app_rmk_metric_a}

The $Y,V$ decomposition is near `orthogonal' in the sense that $XY\perp V$ and it makes the metric separable. More precisely, consider our $Y,V$ representation. $\forall \Delta_i \in T_X St, i=1,2$, if we denote $Y_{\Delta, i}=X^\top \Delta_i$ and $V_{\Delta,2}=(I-XX^\top)\Delta_i$, which means $\Delta_i = XY_{\Delta,i} +V_{\Delta,i}$, then the metric \eqref{eqn_metric_e} can be rewritten as
\begin{align*}
        g_X(\Delta_1, \Delta_2)=&\tr((XY_{\Delta,1} +V_{\Delta,1})^\top (I-aXX^\top) (XY_{\Delta,2} +V_{\Delta,2}))\\
        =&(1-a)\tr(Y_{\Delta,1}^\top Y_{\Delta,2})+\tr(V_{\Delta,1}^\top V_{\Delta,2}).
\end{align*}
This means the metric of the tangent space is a linear combination of the metrics of $Y$ and $V$-directions. Therefore, this gives an intuition why our continuous dynamics and numerical discretization in the following can handle the family of canonical-type metrics.

\section{Choice of $\gamma$}

Although monotonicity of $r$ suffices for the convergence, choices of $\gamma=\dot{r}/r$ affect the convergence speed just like in Euclidean cases (e.g.,\cite{wibisono2016variational,wilson2021lyapunov}). Popular choices include constant $\gamma$ and $\gamma(t) = 3/t$, but methods proposed here work for all $\gamma$'s (e.g., $\gamma=3/t+ct^p$ for variance reduction \cite{tao2020variational}). However, for simplicity we will use constant $\gamma$ from now on.

\section{Proof of Theorem~\ref{thm_XQ_dymamics}}
\label{proof_thm_XQ_dynamics}
\begin{proof}[Proof of Theorem~\ref{thm_XQ_dymamics}, Part 1]
Given the Lagrangian
\begin{equation}
    L(X, \dot{X}, \Lambda, t)=r(t)\left[\frac{1}{2}\tr\left(\dot{X}^\top(I-aXX^\top)\dot{X}\right)-f(X)\right]-\frac{1}{2}\tr\left(\Lambda^\top(X^\top X-I)\right),
\end{equation}
Legendre transform gives momentum $P$ as
\[
    P:=\frac{\partial L}{\partial \dot{X}}=r(I-aXX^\top)\dot X.
\]
One can equivalently switch the Hamiltonian picture, and the corresponding Hamiltonian is $H:T^*\mathsf{St}\rightarrow \mathbb{R}$ with $H(X, P):=Tr(P^\top \dot{X})-L$
\begin{equation}
    H(X, P)=\frac{1}{2r}\text{Tr}\left[P^\top (I-bXX^\top)P\right]+rf(X)+\frac{1}{2}\text{Tr}\left[\Lambda^\top(X^\top X-I_{m\times m})\right]
\end{equation}
where $b:=\frac{a}{a-1}$ solves $(I-aXX^\top)(I-bXX^\top)=I$. Hence we get the following Hamilton's equations
\begin{equation}
    \left\{
    \begin{aligned}
        \dot{X}=&\frac{\partial H}{\partial P}=\frac{1}{r}(I-bXX^\top)P\\
        \dot{P}=&-\frac{\partial H}{\partial X}=\frac{b}{2r}\left(PP^\top X+XP^\top P\right)-r\frac{\partial f}{\partial X}-X\Lambda
    \end{aligned}
    \right.
\end{equation}
Since $X^\top X\equiv I$, which gives $\dot{X}^\top X+X^\top \dot{X}=0$. So we have $X^\top P+P^\top X=0$. Take derivative to get $\dot{X}^\top P+X^\top \dot{P}+\dot{P}^\top X+P^\top \dot{X}=0$, and use the condition that $\Lambda$ is symmetric, we can solve $\Lambda$ is
\begin{equation}
    \Lambda=\frac{b+2}{2r}P^\top P-\frac{b}{2r}X^\top PP^\top X-\frac{r}{2}(X^\top G+G^\top X)
\end{equation}
And $(X, P)$ system becomes
\begin{equation}
    \left\{
    \begin{aligned}
        \dot{X}=&\frac{\partial H}{\partial P}=\frac{1}{r}(I-bXX^\top)P\\
        \dot{P}=&-\frac{\partial H}{\partial X}=\frac{b}{2r}PP^\top X-\frac{1}{r}XP^\top P+\frac{b}{2r}XX^\top PP^\top X-rG+\frac{r}{2}(XX^\top G+XG^\top X)
    \end{aligned}
    \right.
    \label{eqn_XP_dynamics}
\end{equation}
By a coordinate change $Q:=\frac{1}{r}(I-bXX^\top)P\in T_X \mathsf{St}$ and define friction parameter $\gamma:=\dot{r}/r$,  Eq.~\eqref{eqn_XP_dynamics} becomes
\begin{align*}
    \dot{Q}=&-\frac{\dot{r}}{r^2} (I-bXX^\top)P+\frac{1}{r}(I-bXX^\top)\dot{P}-\frac{b}{r}\dot{X}X^\top P-\frac{b}{r}X\dot{X}^\top P\\
    =&-\gamma Q-XQ^\top Q-\frac{3a}{2}(I-XX^\top)QQ^\top X-G+\frac{1+b}{2}XX^\top G+\frac{1-b}{2}XG^\top X
\end{align*}
\begin{equation}
    \left\{
    \begin{aligned}
    \dot{X}=&Q\\
    \dot{Q}=&-\gamma Q-XQ^\top Q-\frac{3a}{2}(I-XX^\top)QQ^\top X-G+\frac{1+b}{2}XX^\top G+\frac{1-b}{2}XG^\top X
    \end{aligned}
    \right.
\end{equation}
\end{proof}

In order to prove the second part of Thm. \ref{thm_XQ_dymamics}, we need the following lemma.

\begin{lemma}[First-order stationary point]
    If $X\in \mathsf{St}$ s.t. $G-\frac{1+b}{2}XX^\top G-\frac{1-b}{2}XG^\top X=0$, then $\forall \Delta\in T_X \mathsf{St}$, we have $\tr(G^\top \Delta)=0$, which means $X$ is a first-order stationary point of $f$.
    \label{lemma_stationary_point}
\end{lemma}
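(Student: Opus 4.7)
The plan is to exploit two linear consequences of the hypothesis $G = \frac{1+b}{2}XX^\top G + \frac{1-b}{2}XG^\top X$: an algebraic simplification obtained by contracting with $X^\top$, and a direct substitution into $\tr(G^\top \Delta)$. The key observation I expect to make first is that $X^\top G$ must be symmetric, after which the trace identity becomes a standard sym-meets-skew cancellation against the tangent condition $X^\top \Delta + \Delta^\top X = 0$.

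Concretely, I would first left-multiply the hypothesis by $X^\top$ and use $X^\top X = I_{m\times m}$ to obtain
\begin{equation*}
X^\top G - \tfrac{1+b}{2} X^\top G - \tfrac{1-b}{2} G^\top X = \tfrac{1-b}{2}\bigl(X^\top G - G^\top X\bigr) = 0.
\end{equation*}
Since $b = a/(a-1)$ with $a < 1$ gives $b \leq 0$, in particular $b \ne 1$, so $X^\top G = G^\top X$, i.e.\ the $m \times m$ matrix $M := X^\top G$ is symmetric. This is the small but crucial step that the hypothesis encodes more than just a linear decomposition of $G$.

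Next I would substitute the hypothesis directly into $\tr(G^\top \Delta)$ for an arbitrary $\Delta \in T_X \mathsf{St}$:
\begin{equation*}
\tr(G^\top \Delta) = \tfrac{1+b}{2}\tr\bigl(G^\top X\, X^\top \Delta\bigr) + \tfrac{1-b}{2}\tr\bigl(X^\top G\, X^\top \Delta\bigr).
\end{equation*}
Using the symmetry $G^\top X = M = X^\top G$ established above, both terms collapse to $\tr(M \cdot X^\top \Delta)$ and sum to $\tr(M\, X^\top \Delta)$ (the $b$'s cancel). The tangent condition forces $S := X^\top \Delta$ to be skew-symmetric, and $\tr(MS) = 0$ whenever $M$ is symmetric and $S$ is skew (a one-line check: $\tr(MS) = \tr((MS)^\top) = \tr(S^\top M^\top) = -\tr(SM) = -\tr(MS)$). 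This finishes the argument.

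The only potential subtlety is verifying $b \ne 1$ so that the contraction step really yields symmetry of $X^\top G$; this is immediate from the standing assumption $a < 1$ in Definition~\ref{def_metric}. No additional manifold machinery is needed, so I do not foresee any serious obstacle beyond book-keeping.
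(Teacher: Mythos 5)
Your proposal is correct and follows essentially the same route as the paper's proof: contract the hypothesis with $X^\top$ to obtain the key identity (you phrase it as symmetry of $X^\top G$, the paper as $G = XG^\top X$; these are the same fact via $X^\top X = I$), then combine with the skew-symmetry of $X^\top\Delta$. The paper writes the final cancellation as a trace chain yielding $\tr(G^\top\Delta) = -\tr(G^\top\Delta)$, while you more cleanly invoke the standard symmetric-times-skew trace identity, but the underlying argument is identical.
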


\begin{proof}
Left multiply both side of $G-\frac{1+b}{2}XX^\top G-\frac{1-b}{2}XG^\top X=0$ by $XX^\top$, we have $XX^\top G=XG^\top X$, further we also have $G=XG^\top X$.

So we have
\begin{align*}
\tr(G^\top \Delta)=&\tr(X^\top GX^\top \Delta)\\
=&-\tr(X^\top G\Delta^\top X)\\
=&-\tr(\Delta^\top XX^\top G)\\
=&-\tr(\Delta^\top G)
\end{align*}
Thus we have $\tr(\Delta^\top G)=0, \forall\Delta\in T_X \mathsf{St}$, which means $X$ is a 1-order stationary point of $f$.

\end{proof}

\begin{proof}[Proof of Theorem~\ref{thm_XQ_dymamics}, Part 2]

    Let $t\rightarrow (X(t), Q(t))$ be a solution of Eq. \eqref{eqn_XQ_dynamics}. Define the `energy' function $E: T\mathsf{St} \to \mathbb{R}$ as
    \begin{equation}
        E(X, Q):=\frac{1}{2} \tr(Q^\top(I-aXX^\top) Q)+f(X)
    \end{equation}
    This gives a Lyapunov function. So we have a neighbourhood $U$ of $(X_*, 0)$ such that $E(X, Q)\ge f(X)\ge f(X_*)$ for any $(X, Q)\in U$. More over, since $X^\top Q+Q^\top X\equiv 0$, we have
    \begin{align}
        \frac{dE}{dt}(X(t), Q(t))=&\tr(Q^\top(I-aXX^\top) \dot{Q})-a\tr(Q^\top(\dot{X}X^\top)Q)+\tr\left(\derative^\top \dot{X}\right)
    \end{align}
    Using the fact that $X^\top Q+Q^\top X\equiv 0$, we have $\tr(Q^\top XQ^\top Q)=0$ and $\tr(Q^\top XX^\top G)+\tr(Q^\top XG^\top X)=0$, which gives
    \begin{align}
        \frac{dE}{dt}(X(t), Q(t))=&-\gamma  \tr(Q^\top(I-aXX^\top) Q)\le 0
    \end{align}
    Since we have $r$ monotonely increasing, which means $\gamma = r'/r > 0, \forall t$. Then we have the energy is decreasing monotonically, implying that $Q=0$ when converged. 
    By lemma \ref{lemma_stationary_point} that the limiting point for $X(t)$ is a first order stationary point, which is $X_*$ since it is an isolated local minimum.
\end{proof}

\begin{remark}
    For the sake of length, rate of convergence in specific  situations (e.g., under geodesic convexity) will not be quantified, but it should be obtainable via tools in \cite{wilson2021lyapunov, duruisseaux2021variational}.
\end{remark}

\section{Proof of Theorem~\ref{thm_XQ_constraint}}
\label{proof_thm_XQ_constraint}

\begin{proof}
We can derive a new system of ODEs as
\begin{equation*}
    \left\{
    \begin{aligned}
        \frac{d}{dt}(X^\top X)=&X^\top Q+Q^\top X\\
        \frac{d}{dt}(X^\top Q)=&-\gamma X^\top Q-(I-X^\top X)Q^\top Q-\frac{3a}{2}X^\top(I-XX^\top)QQ^\top X \\
        &- X^\top G+\frac{1+b}{2}X^\top XX^\top G+\frac{1-b}{2}X^\top XG^\top X
    \end{aligned}
    \right.
\end{equation*}
with viewing $X^\top G$ as a matrix function of $t$ following Eq. \eqref{eqn_XQ_dynamics}. By the uniqueness and existence of ODE, we have that this system of ODE with variable $(X^\top X, X^\top Q)$ and initial condition $(I,0)$ has the unique solution $X^\top X\equiv I, X^\top Q\equiv 0$.
\end{proof}

\section{XYV system is structure preserving}

\begin{theorem}[Constrained optimization with unconstrained dynamics] As long as the initial condition of  \eqref{eqn_XYV_dynamics} satisfies
\[  X(0)^\top X(0)=I_{m\times m}, \qquad
    Y(0)+Y(0)^\top=0_{m\times m},\qquad
    X(0)^\top V(0)=0_{m\times m}, 
\]
then the dynamics automatically satisfies the same constraint, i.e.,
\begin{equation}
  X(t)^\top X(t)=I_{m\times m}, \qquad
    Y(t)+Y(t)^\top=0_{m\times m},\qquad
    X(t)^\top V(t)=0_{m\times m}, 
    \label{eqn_XYV_constraint}
\end{equation}
\label{thm_XYV_dynamics_structure_preserving}
\end{theorem}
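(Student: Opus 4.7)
Plan of proof. The goal is to show that the three constraint residuals
\[
C_1(t) := X(t)^\top X(t) - I, \qquad C_2(t) := Y(t) + Y(t)^\top, \qquad C_3(t) := X(t)^\top V(t)
\]
(which vanish at $t=0$) vanish for all $t$. My approach mirrors the strategy used in the proof of Theorem~\ref{thm_XQ_constraint}: I would differentiate each residual along the XYV flow~\eqref{eqn_XYV_dynamics}, show that the triple $(\dot C_1, \dot C_2, \dot C_3)$ can be written as a time-dependent linear combination of $(C_1, C_2, C_3)$, and then invoke uniqueness of ODE solutions to conclude that the trivial solution $(C_1, C_2, C_3) \equiv (0,0,0)$ is the solution with the prescribed initial data.

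First I would handle $C_2$, which is the easiest: since $\dot Y = -\gamma Y - \tfrac{1-b}{2}(X^\top G - G^\top X)$ where $G := \partial f/\partial X$, transposing and adding immediately yields
\[
\dot C_2 = -\gamma(Y + Y^\top) - \tfrac{1-b}{2}\bigl[(X^\top G - G^\top X) + (G^\top X - X^\top G)\bigr] = -\gamma\, C_2,
\]
so $C_2(0) = 0$ forces $C_2(t) \equiv 0$. Next I would compute $\dot C_1 = \dot X^\top X + X^\top \dot X$ using $\dot X = XY + V$; grouping the terms carefully, rewrite $X^\top X$ as $C_1 + I$, and the identity $(C_1+I)Y + Y^\top(C_1+I) = C_2 + Y^\top C_1 + C_1 Y$ shows that
\[
\dot C_1 = C_2 + Y^\top C_1 + C_1 Y + C_3 + C_3^\top,
\]
which is linear in $(C_1, C_2, C_3)$ with bounded coefficients.

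The most delicate computation is $\dot C_3 = (XY+V)^\top V + X^\top \dot V$. After expanding the $\dot V$ equation, the two terms that threaten to lie outside the span of the $C_i$'s are $X^\top X V^\top V$ (from $-XV^\top V$) and $X^\top(I-XX^\top)G$ (from the projected gradient). Here the substitution $X^\top X = I + C_1$ is the key: both terms produce a clean non-constraint part that cancels, plus a remainder proportional to $C_1$. Concretely, $X^\top X V^\top V = V^\top V + C_1 V^\top V$ and $X^\top(I - XX^\top)G = -C_1 X^\top G$, so the $V^\top V$ contribution from $Y^\top X^\top V + V^\top V$ cancels with $-X^\top X V^\top V = -V^\top V - C_1 V^\top V$, leaving
\[
\dot C_3 = -\gamma C_3 + Y^\top C_3 + \tfrac{3a-2}{2} C_3 Y + C_1\bigl(X^\top G - V^\top V\bigr),
\]
which is again a linear combination of the residuals with coefficients depending only on the original dynamical variables along the trajectory. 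Treating these coefficients as known matrix-valued functions of $t$, the joint linear ODE for $(C_1, C_2, C_3)$ has $(0,0,0)$ as its unique solution with zero initial data, which is the claim.

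The main obstacle is the third computation: one needs to resist the temptation to simplify $X^\top X$ to $I$ prematurely, since doing so loses precisely the terms that couple $\dot C_3$ back to $C_1$ and would leave a stray $V^\top V$ that is not a constraint residual. Once one is disciplined about keeping $X^\top X = I + C_1$ throughout, every apparently dangerous term either cancels or multiplies a $C_i$, and the system closes cleanly.
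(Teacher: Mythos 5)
Your proposal is correct and follows essentially the same route as the paper's proof: differentiate the constraint quantities along the flow of \eqref{eqn_XYV_dynamics}, observe that they satisfy a closed linear (nonautonomous) ODE system with the trajectory-dependent quantities treated as known coefficients, and conclude by uniqueness that the zero residual is the only solution with zero initial data. Your explicit bookkeeping of the $C_1(X^\top \frac{\partial f}{\partial X} - V^\top V)$ coupling in $\dot C_3$ is in fact slightly more careful than the paper's displayed residual ODE, which has the same structure but states the $X^\top V$ equation a bit loosely.
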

\begin{proof}
    By the uniqueness of solution of ODE, we know that the following ODE for $(X^\top X, Y^\top+Y, X^\top V)$, derived from Eq. \eqref{eqn_XYV_dynamics}, has a unique solution if view $V^\top V$ as an independent variable.
    \begin{equation}
    \left\{
    \begin{aligned}
        \frac{d}{dt}\left(X^\top X\right)=&X^\top XY+Y^\top X^\top X+ X^\top V+V^\top X\\
        \frac{d}{dt}(Y^\top+Y)=&-\gamma (Y^\top +Y)\\
        \frac{d}{dt}(X^\top V)=&-\gamma X^\top V+YX^\top V+\frac{3a-2}{2}X^\top VY+(I-X^\top X)V^\top V
    \end{aligned}
    \right.
\end{equation}
We can see that given initial condition  $(X_0^\top X_0, Y_0^\top+Y_0, X_0^\top V_0) = (I,0,0)$, the unique solution is $(X^\top X, Y^\top+Y, X^\top V)\equiv (I,0,0)$. So we know that the constraint in Eq. \eqref{eqn_XYV_constraint} are preserved by continuous dynamics Eq. \eqref{eqn_XYV_dynamics}.
\end{proof}

\section{Proof of Theorem~\ref{thm_splittings_structure_preserving}}
\label{proof_splittings_structure_preserving}
\begin{proof}
    We assume the initial condition $(X_0, Y_0, V_0)$ satisfies constraint \eqref{eqn_XYV_constraint}.
    
    For Eq. \eqref{eqn_split_1}, we check $X^\top(t) X(t)\equiv 0$, $Y(t)+Y^\top(t)\equiv 0$ and $X^\top(t) V(t)\equiv0$.
    \begin{align*}
        \frac{d}{dt}(Y^\top+Y)=&\dot{Y}^\top + \dot{Y}\\
        =&-\gamma (Y+Y^\top)
    \end{align*}
    with initial condition $Y_0+Y_0^\top=0$, we have $Y+Y^\top \equiv 0$
    
    \begin{align*}
        \frac{d}{dt}X^\top V=&\dot{X}^\top V + X^\top \dot{V}\\
        =Y^\top XV
    \end{align*}
    with initial condition $X_0^\top V_0=0$, we have $X^\top V\equiv 0$
    
    \begin{align*}
        \frac{d}{dt}X^\top X=&X^\top \dot{X} + \dot{X}^\top X\\
        =&X^\top XY+X^\top V +Y^\top X^\top X+V^\top X\\
        =&X^\top X Y+Y^\top X^\top X
    \end{align*}
    Using the conclusion $Y+Y^\top \equiv 0$ that we have just proved and initial condition $X_0^\top X_0=I$, we have $X^\top X\equiv 0$. 
    
    For Eq. \eqref{eqn_split_2}, we have the exact solution of $\phi_2$ is given by $X(t)=X(0), Y(t)=Y(0)$, and with $M:=\gamma I_{m\times m}-\frac{3a-2}{2}Y(t)$,
\begin{equation}
        V(t)=V(0) \, \expm(-M t)- \left(I-X(0)X(0)^\top\right)\frac{\partial f}{\partial X}(X(0))M^{-1}(I-\expm(-Mt)),
        \label{eqn_V_sol}
\end{equation}
    
    For Eq. \eqref{eqn_split_3}, we need to check $\frac{d}{dt}\left(X^\top(t) X(t)\right)=0$ and $\frac{d}{dt}(X^\top(t) V(t))=0$.
    
    By the uniqueness of solution of ODE, we know that the following ODE for $(X^\top X, X^\top V, V^\top V)$, derived from Eq. \eqref{eqn_split_3},   has a unique solution
    \begin{equation}
    \left\{
    \begin{aligned}
        \frac{d}{dt}\left(X^\top X\right)=&X^\top V+V^\top X\\
        \frac{d}{dt}(X^\top V)=&(I-X^\top X)V^\top V\\
        \frac{d}{dt}(V^\top V)=&-V^\top X V^\top V-V^\top V X^\top V
    \end{aligned}
    \right.
\end{equation}
We can see that given initial condition  $(X_0^\top X_0, X_0^\top V_0, V_0^\top V_0) = (I,0,V_0^\top V_0)$, the unique solution is $(X^\top X, X^\top V, V^\top V)\equiv (I,0,V_0^\top V_0)$. So we know that the constraint $X^\top X=I$, $X^\top V=0$ are preserved by continuous dynamics Eq. \eqref{eqn_split_3}.
    
\end{proof}

\section{Proof of Theorem~\ref{thm_discretizations_structure_preserving}}
\label{proof_discretizations_structure_preserving}

\begin{proof}
\textbf{Eq. \eqref{scheme_split_1} is structure preserving.} We use the idea from \cite{tao2020variational}. Assume the initial value $X_0$, $Y_0$ and $V_0$ satisfies the constraint, i.e., $X_0^\top X_0=I$, $Y_0+Y_0^\top=0$ and $X_0^\top V_0=0$. For the first step updating $Y$, due to the special form of the derivative of $Y$ that is always skew-symmetric, we can tell that $Y_h$ is also skew-symmetric and $Y_h+Y_h^\top=0$. And the skew-symmetricity of $Y_h$ gives us that
\begin{align*}
    X_h^\top X_h &= \expm(hY_h^\top)X_0^\top X_0\expm(hY_h)^\top=0\\
    X_h^\top V_h &= \expm(hY_h^\top)X_0^\top V_0=0
\end{align*}
so all 3 conditions in Eq. \eqref{eqn_XYV_constraint} are satisfied, meaning Eq. \eqref{scheme_split_1} is structure preserving.

\textbf{Eq. \eqref{scheme_split_2} is structure preserving.}

Given initial condition $X_0^\top X_0=I$ and $X_0^\top V_0=0$, we check constraint $X_h^\top V_h=0$.
    \begin{equation*}
        X_h^\top V_h=(1-\gamma h) X_0^\top V_0+\frac{3a-2}{2} h X_0^\top V_0 Y_0 - h X_0^\top \left(I-X_0X_0^\top\right)\frac{\partial f}{\partial X}(X_0)=0
    \end{equation*}
    
\textbf{Eq. \eqref{scheme_split_3} is structure preserving.}
     First, we show that the discretization is a one order approximation of the exact solution. We get $V_h$ and $X_\dagger$ by forward Euler. And since $X_0$ and $V_0$ satisfies the constraint \ref{eqn_XYV_constraint}, we have 
     \begin{align*}
         X_\dagger^\top X_\dagger=&(X_0+hV_0)^\top (X_0+hV_0)\\
         =& (X_0^\top X_0+hX_0^\top V_0+hV_0^\top X_0+h^2V_0^\top V_0)\\
         =&I+h^2V_0^\top V_0
     \end{align*}
     which means $X_h=X_\dagger+\mathcal{O}(h^2)=X_0+hV_0+\mathcal{O}(h^2)$, indeed a one order approximation of exact solution.
     
     Next we show the numerical discretization is structure preserving. For constraint $X^\top V=0$, we can check that
     \begin{align*}
     X_\dagger^\top V_h &= X_0^\top V_0+hX_0^\top X_0 V_0^\top V_0-hX_0^\top X_0V_0^\top V_0-h^2V_0^\top X_0 V_0^\top V_0 = 0
     \end{align*}
     which gives us $X_h^\top V_h=0$. For restriction $X^\top X = I$, we have 
     \begin{align*}
     X_h^\top X_h &= (X_\dagger^\top X_\dagger)^{-\frac{1}{2}} X_\dagger^\top X_\dagger(X_\dagger^\top X_\dagger)^{-\frac{1}{2}}=I
     \end{align*}

\end{proof}

\section{Proof of Theorem~\ref{thm_structure_preserving_general_composition}}
\label{proof_structure_preserving_general_composition}
\begin{proof}
    The composition of structure-preserving maps is structure preserving. $\phi_1\circ\phi_2\circ\phi_3$ is a 1st-order integrator due to operator splitting theory \cite{mclachlan2002splitting}, and its convergence order is kept after any $\phi_j$ gets replaced by $\bar{\phi}_j$ as long as the difference between  $\bar{\phi}_j$  and $\phi_j$ is higher-order \cite{tao2016explicit}.
\end{proof}

\section{Discussions on our numerical integrator}
\label{app_discussions_numerical}
\subsection{Benefits of the step $X_\dagger (X_\dagger^\top X_\dagger)^{-\frac{1}{2}}$}
\label{app_benefits_of_X_dagger_step}
To remain the position $X$ on the manifold, certain techniques that pull point back to manifold are used. For example, in \cite{li2020efficient, lin2020projection} for an $n$-by-$m$ matrix $X$ that is not on the manifold, they perform $QR$ decomposition $X=QR$ and update the new $X$ to be the first $m$ columns of $Q$. However, the $QR$ decomposition is not unique and such step does not have a closed form expression. More importantly, it cannot help design a structure-preserving scheme.

Instead, our algorithm has a similarly functioned step via the square root of the inverse matrix  $X_h=X_\dagger (X_\dagger^\top X_\dagger)^{-1/2}$. In this case, it follows that $X_h^\top X_h=I$. This step is carefully designed from discretization of ODE ensuring the structure of $X$, $Y$, $V$ is preserved at the same time. Through this update, we are able to obtain a structure-preserving method with a closed form expression (see the proof of Theorem~\ref{thm_discretizations_structure_preserving} in Apdx.~\ref{proof_discretizations_structure_preserving}). Note that the square root of the inverse matrix can be iteratively solved with low computational cost (see Algo.~\ref{algo_mat_root_inv}; \cite{higham1997stable}). It was proved to be quadratically convergent, which means only a couple of iterations are needed to reach machine precision, and only matrix multiplications are needed. This lead to our inner loop is more efficient in both cost per iteration and convergence speed comparing to \cite{li2020efficient}. See Sec. \ref{app_sec_GEV} for more details.

\begin{algorithm}[H]
    \SetAlgoLined
    \KwIn{Symmetric $m$-by-$m$ matrix $A$, $tol$}
    \KwInitialization{$Y_0=A$, $Z_0=I_{m\times m}$, $k=0$}
     \While{$\|Y_{k}^2-A\|  \ge$ tol}{
     $Y_{k+1}=\frac{1}{2}Y_k(3I-Z_k Y_k)$\\
     $Z_{k+1}=\frac{1}{2}(3I-Z_k Y_k)Z_k$\\
     $k\leftarrow k+1$
    }
     \Return{$Y_{k}\approx A^{\frac{1}{2}}$ and $Z_{k}\approx A^{-\frac{1}{2}}$}
     \caption{Algorithm for matrix root and matrix root inversion (Eq. (2.6) in \cite{higham1997stable})
}

  \label{algo_mat_root_inv}
\end{algorithm}

Another benefit of this step is that it is more stable to the truncation error produced by finite machine precision (which can become a significant issue if the model is trained in single accuracy on consumer graphic cards or even quantized). Although $X_\dagger^\top X_\dagger$ follows the expression
 \begin{align*}
         X_\dagger^\top X_\dagger=&(X_0+hV_0)^\top (X_0+hV_0)\\
         =& (X_0^\top X_0+hX_0^\top V_0+hV_0^\top X_0+h^2V_0^\top V_0)\\
         =&I+h^2V_0^\top V_0,
     \end{align*}
    the stability of $X_\dagger (X_\dagger^\top X_\dagger)^{-1/2}$ is much better than $X_\dagger(I+h^2V_0^\top V_0)^{-1/2}$. Particularly, when $n=m$, it is an identical map if there is no machine error.

\subsection{Simplification of matrix exponential}
\label{app_simplify_matrix_exponential}

  There are two steps in the discretizations~\eqref{eqn_V_sol} and~\eqref{scheme_split_1} that use the matrix exponential of $m\times m$ matrices. Similar matrix exponential is also shown in \cite{li2020efficient,wen2013feasible} but with matrices of much larger size $n\times n$ (note $n>m$). We introduce two ways of simplifying the computation of the matrix exponential. First, note that Cayley transform is a 2nd-order structure-preserving approximation of the matrix exponential, defined as follows
\begin{equation}
        \text{Cayley}(hY)=(I-hY/2)^{-1}(I+hY/2)=\exp(hY)+\mathcal{O}(h^3).
        \label{eqn_cayley}
    \end{equation}
By applying the Cayley transform, the computation is reduced to an inversion of matrix and matrix multiplication while it still keeps the variable on the manifold.

Additionally, we can use the first-order forward Euler to discretize the ODEs for the two steps, which is also the first-order truncation of the matrix exponential. In this case, the $X$ update in scheme~\eqref{scheme_split_1} is just $X_h=X_0+hY_h$ and is no longer structure-preserving but structure-preserving property of the overall algorithm is not affected (see Thm.~\ref{thm_structure_preserving_approximation}). 

For scheme~\eqref{eqn_V_sol},
its Cayley map approximation is structure preserving and defined as follows
    \begin{equation}
    \label{eqn_scheme2_Cayley}
        V_h=V_0\  \text{Cayley}(-\gamma h I+\frac{3a-2}{2}hV_0Y_0)-\frac{1-\exp(-\gamma h)}{\gamma}(I-X_0X_0^\top)\frac{\partial f}{\partial X_0}.
    \end{equation}
Note the variable $V$ will still satisfy the constraint~\eqref{eqn_XYV_constraint} even if we change the update of $V$ to forward Euler
    \begin{equation}
        V_h=V_0-\gamma h V_0+\frac{3a-2}{2}hV_0Y_0-\frac{1-\exp(-\gamma h)}{\gamma}(I-X_0X_0^\top)\frac{\partial f}{\partial X_0}.
        \label{eqn_scheme2_forwardEuler}
    \end{equation}

In fact, the above two ways and the original matrix exponential share the same complexity per iteration. Also, we do not find any significant difference in the convergence in numerical experiments (Apdx. \ref{app_sec_GEV}). Hence we will always use forward Euler as default, which has the lowest computational cost.

\subsection{Proof of Theorem \ref{thm_structure_preserving_approximation}}
\label{proof_thm_structure_preserving_approximation}

\begin{proof}
The composition of $\bar{\phi_1}$, $\bar{\phi_2}$, $\bar{\phi_3}$ in any order will give a structure preserving scheme, because in Sec. \ref{proof_discretizations_structure_preserving} we  proved that each of them is structure preserving. 

When we apply it in specific order  $\bar{\phi_3}\circ\bar{\phi_1}\circ\bar{\phi_2}$, we prove when $X^\top X=I$ is no longer preserved by $\bar{\phi_2}$, the assembled scheme is still structure preserving. We only need to prove that $[X_h, Y_h, V_h]=\bar{\phi_3}(x_0, Y_0, V_0)$ satisfies the following: when initial condition satisfies $Y_0^\top + Y_0=0$ and $X_0^\top V_0=0$, we have  $X_h^\top X_h=I$, $Y_h^\top + Y_h=0$ and $X_h^\top V_h=0$.

Since we have the step $X_h=X_\dagger(X_\dagger^\top X_\dagger)^{-1/2}$, we have $X_h^\top X_h=I$. $Y$ is not updating so $Y_h+Y_h^\top =0$.
\begin{align*}
    X_\dagger^\top V_h=X_0^\top V_0-hX_0^\top X_0V_0^\top V_0+hX_0^\top X_0V_0^\top V_0-h^2X_0^\top X_0V_0^\top X_0V_0^\top V_0=0
\end{align*}
As a result, $X_h^\top V_h=(X_\dagger^\top X_\dagger)^{-1/2} X_\dagger^\top V_h=0$

\end{proof}

\subsection{The integrator in rescaled coordinates used in Algorithm~\ref{algo_St_SGD}}
\label{app_integrator_in_algorithm}
\begin{minipage}{0.6\textwidth}
     \begin{equation*}
    \text{$\tilde{\phi}_1$}:\left\{
\begin{aligned}
    X_\eta=&X_0 +\eta Z_h\\
    Z_\eta=&\mu Z_0-\frac{1-b}{2}\left(X_0^\top\frac{\partial f}{\partial X_0}-\frac{\partial f}{\partial X_0}^\top X_0\right)\\
    U_\eta =& U_0
\end{aligned}
\right.
\end{equation*}
        \end{minipage}
     \begin{minipage}{0.4\textwidth}  \begin{equation*}
     \text{$\bar{\phi}_2$}:\left\{
\begin{aligned}
        X_\eta=&X_0\\
        Z_\eta=&Z_0\\
        U_\eta=&\mu U_0+\frac{3a-2}{2}\eta U_0 Z_0\\
        &-\left(I-X_0X_0^\top\right)\frac{\partial f}{\partial X_0},
        \end{aligned}
        \right.
\end{equation*}
\end{minipage}
 \begin{equation*}
\text{$\bar{\phi}_3$}:\left\{
\begin{aligned}
    X_\dagger=&X_0+\eta U_0X_0^\top X_0\\
    X_\eta=&X_\dagger(X_\dagger^\top X_\dagger)^{-1/2}\\
    Z_\eta=&Z_0\\
    U_\eta=&U_0-\eta X_0 U_0^\top U_0
\end{aligned}
\right.
\end{equation*}

\subsection{The integrator in rescaled coordinates used in Algorithm~\ref{algo_St_Adam}}
\label{app_integrator_adam}

 \begin{equation*}
\text{`gradient' and 2-order momentum}:\left\{
\begin{aligned}
    f_i=&\frac{1-b}{2}\left(X_i^\top\frac{\partial f}{\partial X}(X_i)-\frac{\partial f}{\partial X}(X_i)^\top X_i\right)\\
    g_i=&(I-X_i X_i^\top)\frac{\partial f}{\partial X}(X_i)\\
    p_{i+1}=&\beta_2 p_i+(1-\beta_2)f_i^{\circ 2}\\
    q_{i+1}=&\beta_2 q_i+(1-\beta_2)g_i^{\circ 2}\\
\end{aligned}
\right.
\end{equation*}
\begin{minipage}{0.6\textwidth}

     \begin{equation*}
    \text{$\hat{\phi}_1$}:\begin{cases}
    
    X_{i+1}=&X_i+ \eta\sqrt{1-\beta_2^{i+1}} Z_{i+1}\oslash(p_{i+1}^{\circ 1/2}+\epsilon)\\
    Z_{i+1}=&\beta_1 Z_i-(1-\beta_1)f_i\\
    U_{i+1} =& U_i
\end{cases}
\end{equation*}
        \end{minipage}
        \begin{minipage}{0.4\textwidth}
       \begin{equation*}
     \text{$\hat{\phi}_2$}:
\begin{cases}
        
        X_{i+1}=&X_i\\
        Z_{i+1}=&Z_i\\
        U_{i+1}=&\beta_1 U_i-\frac{3a-2}{2}\eta U_iZ_i-(1-\beta_1)g_i\\
        \end{cases}
\end{equation*}
\end{minipage}

 \begin{equation*}
\text{$\hat{\phi}_3$}:\left\{
\begin{aligned}
    \tilde{U}=&\sqrt{1-\beta_2^{i+1}}(I-X_i(X_i^\top X_i)^{-1}X_i^\top)(U_i\oslash (q_{i+1}^{\circ 1/2}+\epsilon))\\
    X_\dagger=&X_i+\eta\tilde{U}X_i^\top X_i\\
    X_{i+1}=&X_\dagger(X_\dagger^\top X_\dagger)^{-1/2}\\
    Z_{i+1}=&Z_i\\
    U_{i+1}=&U_i-\eta X_i \tilde{U}^\top U_i\\
\end{aligned}
\right.
\end{equation*}

\section{Proof of Theorem~\ref{thm_adam_structure_preserving}}
\label{app_proof_thm_adam_structure_preserving}
\begin{proof}
We will use the same notation as in algorithm \ref{algo_St_Adam}. Assume $X_i^\top X_i=I$, $Y_i^\top +Y_i=0$, $X_i^\top V_i=0$, $p_i=p_i^\top$. Our initialization satisfies these conditions. We prove that these conditions are satisfied after mapped by $\hat{\phi}_3\circ\hat{\phi}_1\circ\hat{\phi}_2$ in the following.
\textbf{Step 1}: Due to the skew-symmetricity of $f_i$, we have $p_{i+1}$ is symmetric.

\textbf{Step 2}: $X_i, Z_i, U_{i+\frac{1}{2}}:= \hat\phi_2 (X_i, Z_i, U_i)$, where $U_{i+\frac{1}{2}}=\beta_1 U_i-\frac{3a-2}{2}\eta U_iZ_i-(1-\beta_1)g_i$. Since $X_i^\top U_i = 0$ and $X_i^\top g_i=0$, we can check 
\begin{equation*}
    X_i^\top U_{i+\frac{1}{2}} = \beta_1 X_i^\top U_i-\frac{3a-2}{2}\eta X_i^\top U_iZ_i-(1-\beta_1)X_i^\top g_i=0
\end{equation*}

\textbf{Step 3}:$X_{i+\frac{1}{2}}, Z_{i+1}, U_{i+\frac{1}{2}}:= \hat\phi_1 (X_i, Z_i, U_{i+\frac{1}{2}})$, where $Z_{i+1}=\beta_1 Z_i-(1-\beta_1)f_i$ and $X_{i+\frac{1}{2}}=X_i+\eta\sqrt{1-\beta_2^{i+1}} X_i\left(Z_{i+1}\oslash (p_{i+1}^{\circ \frac{1}{2}}+\epsilon)\right)$.

Since we have $X_i^\top U_{i+\frac{1}{2}}=0$ We can have directly $X_{i+\frac{1}{2}}^\top U_{i+\frac{1}{2}}=0$. And due to $f_i$ is symmetric, $Z_{i+1}$ is still skew-symmetric. Note that $X_{i+\frac{1}{2}}^\top X_{i+\frac{1}{2}}=I$ no longer stands now but it satisfied until the algorithm finishes.

\textbf{Step 4} $X_{i+1}, Z_{i+1}, U_{i+1}:= \hat\phi_3 (X_{i+\frac{1}{2}}, Z_{i+1}, U_{i+\frac{1}{2}})$. First we can see from $X_{i+1}=X_\dagger(X_\dagger^\top X_\dagger)^{-\frac{1}{2}}$ that $X_{i+1}^\top X_{i+1}=I$ as long as $X_\dagger=X_{i+\frac{1}{2}}+\eta\tilde{U}X_{i+\frac{1}{2}}^\top X_{i+\frac{1}{2}}$ is full rank, which is always true since $X_i$ is full rank and $\eta$ is small.

Since we have $X_{i+\frac{1}{2}}^\top \tilde{U}=0$ by the special form of matrix $I-X_{i+\frac{1}{2}}(X_{i+\frac{1}{2}}^\top X_{i+\frac{1}{2}})^{-1}X_{i+\frac{1}{2}}^\top$
\begin{align*}
    X_\dagger^\top U_{i+\frac{1}{2}}=\eta  X_{i+\frac{1}{2}}^\top X_{i+\frac{1}{2}}\tilde{U}^\top U_{i+\frac{1}{2}}
\end{align*}
Since $X_\dagger^\top X_\dagger=I$,
\begin{align*}
    X_{i+1}^\top U_{i+1}&=(X_\dagger^\top X_\dagger)^{-\frac{1}{2}}X_{i+\frac{1}{2}}^\top \left(U_{i+\frac{1}{2}}-\eta X_{i+\frac{1}{2}}\tilde{U}^\top U_{i+\frac{1}{2}}\right)=0
\end{align*}
So we proved all the constrain are satisfied again, which means $X_{i+1}^\top X_{i+1}=I$, $Y_{i+1}^\top +Y_{i+1}=0$, $X_{i+1}^\top V_{i+1}=0$, $p_{i+1}=p_{i+1}^\top$
\end{proof}

\section{Optimization on $\mathsf{SO}(n)$ as a special case for $n=m$ on $\mathsf{St}(n,m)$}
\label{sec_SOn}

Our method for Stiefel optimization can naturally be applied on the special orthogonal group $\mathsf{SO}(n)$ (Apdx. \ref{app_Stnn}) and is defined as follows
\begin{equation}
\mathsf{SO}(n):=\{X\in \mathbb{R}^{n\times n}:X^\top X=I_n, \text{det}X=1\}.
\end{equation}
\cite{tao2020variational} proposed an efficient algorithm based on Lie group structure for the optimization on $\mathsf{SO}(n)$ although it cannot be generalized to Stiefel case. Our method restores the same integrator in \cite{tao2020variational} while using different approach applied to a family of metrics. 

In greater detail, for $\mathsf{SO}(n)$, the canonical-type metric~\eqref{eqn_metric_e} degenerates to Euclidean metric up to constant scaling, i.e.,  $g(\Delta_1, \Delta_2)=(1-a)Tr(\Delta_1^\top\Delta_2)$. Then the corresponding Lagrangian is
\begin{equation}
    L(X, \dot{X}, t)=r(t)\left[\frac{1-a}{2}\tr\left(\dot{X}^\top\dot{X}\right)-f(X)\right]-\frac{1}{2}\tr\left(\Lambda^\top(X^\top X-I)\right),
\end{equation}
which results in the following position-momentum $(X,Q)$ dynamics
 \begin{equation}
        \left\{
        \begin{aligned}
        \dot{X}(t)=&Q(t)\\
        \dot{Q}(t)=&-\gamma(t) Q(t)-X(t)Q(t)^\top Q(t)\\
        &-\frac{\partial f}{\partial X}(t)+\frac{1+b}{2}X(t)X^\top(t)\frac{\partial f}{\partial X}(t)+\frac{1-b}{2}X(t)\frac{\partial f}{\partial X}(t)^\top X(t)
        \end{aligned}
        \right.
        \label{eqn_XQ_SOn_dynamics}
    \end{equation}
with the same initialization as~\eqref{eqn_XQ_dynamics}. Next, apply the same $Y,V$ decomposition of $Q$. Since $I-XX^\top=0$, we have $V=0$, i.e., the momentum $Q$ is purely in $Y$-direction. Hence the equivalent dynamics to~\eqref{eqn_XQ_SOn_dynamics} is the following
\begin{equation}
        \left\{
    \begin{aligned}
        \dot{X}(t)=&X(t)Y(t)\\
        \dot{Y}(t)=&-\gamma Y(t)-\frac{1-b}{2}\left(X^\top(t) \frac{\partial f}{\partial X}(t)-\frac{\partial f}{\partial X}^\top(t) X(t)\right).\\
    \end{aligned}
    \right.
    \label{eqn_XY_SOn_dynamics}
    \end{equation}
    Note when we take $b=-1$ (namely, $a=1/2$),~\eqref{eqn_XY_SOn_dynamics} is identical to the continuous dynamics in \cite{tao2020variational}, although \cite{tao2020variational} uses an intrinsic form which is coordinate-free but not explicit enough. 
    
    The numerical integrator corresponding to (S)GD is defined the same as Section~\ref{app_integrator_in_algorithm} and is summarized in Alg.\ref{algo_orth_SGD}. Note Step~\ref{algo_SOn_sgd_step_dagger} is optional in the $n=m$ case. It gives the identity map if no arithmetic error due to machine precision is incurred. However, this step is beneficial under low precision (see Apdx.~\ref{app_benefits_of_X_dagger_step}).

\begin{algorithm}[H]
    \SetAlgoLined
    \KwHyperparameter{$\eta\in(0, +\infty)$, $\gamma\in[0, +\infty)$, $a<1$, maximum number of iteration $N$}
    \KwInitialization{$X_0$, $V_0$, $Y_0$ s.t. $X_0^\top X_0=I$, $X_0^\top V_0=0$, $Y_0+Y_0^\top=0$, $b=\frac{a}{a-1}$}
    \KwOut{Local minimum $X$ of $f$}

     \For{$i=0,...,N-1$}{
        $f_i=\frac{1-b}{2}\left(X_i^\top\frac{\partial f}{\partial X}(X_i)-\frac{\partial f}{\partial X}(X_i)^\top X_i\right)$\\
        $Y_{i+1}=\mu Y_i-f_i$\\
        $X_\dagger=X_i\expm(\eta Y_{i+1})$\\
        $X_{i+1}=X_\dagger(X_\dagger^\top X_\dagger)^{-\frac{1}{2}}$\label{algo_SOn_sgd_step_dagger}
    }
     \Return{$X_N$}
     \caption{Momentum SGD on $\mathsf{SO}(n)$ (generalization to \cite{tao2020variational})}

  \label{algo_orth_SGD}
\end{algorithm}

    In addition, we also have an adaptive version which was absent in \cite{tao2020variational}:

\subsection{Adam on $\mathsf{SO}(n)$}

In this section, we extend the above optimizer to an Adam version. Note it can be derived from either the special case of the Stiefel optimizer, or the structure of Lie algebra $\mathfrak{so}(n)$, i.e., left-trivialization (see \cite{tao2020variational}) such that the momentum $Y$ can be recognized as an element in $T_I\mathsf{SO}(n)$, the tangent space at the identity. In the latter interpretation, the momentum will always stay in the same tangent space $T_I\mathsf{SO}(n)$ which passes on a message that trivialization almost reproduce the convenience in Euclidean space for $\mathsf{SO}(n)$. Hence unlike the general Adam-Stiefel optimizer (Algo.~\ref{algo_St_Adam}), no extra technique, for example projection, is needed for Adam-$\mathsf{SO}(n)$. Detailed iterations are shown in Algo.~\ref{algo_orth_Adam}. Note this is also a structure preserving scheme (see Thm.~\ref{thm_adam_structure_preserving}).

\begin{algorithm}[H]
    \SetAlgoLined
    \KwHyperparameter{$\eta\in(0, +\infty)$, $\beta_1\in [0, 1)$, $\beta_2\in [0, 1)$, $0<\epsilon\ll 1$, maximum number of iteration $N$}
    \KwInitialization{$X_0$, $V_0$, $Y_0$ s.t. $X_0^\top X_0=I$, $X_0^\top V_0=0$, $Y_0+Y_0^\top=0$, $p_0=0$, $q_0=0$}
     \For{$i=0,...,N-1$}{
        $f_i=\frac{1-b}{2}\left(X_i^\top\frac{\partial f}{\partial X}(X_i)-\frac{\partial f}{\partial X}(X_i)^\top X_i\right)$\\
        $p_{i+1}=\beta_2 p_i+(1-\beta_2)f_i^{\circ 2}$\label{algo2_3}\\
        $Y_{i+1}=\beta_1 Y_i-(1-\beta_1)f_i$\label{algo2_4}\\
        $X_\dagger=X_i \expm\left(\eta \sqrt{1-\beta_2^{i+1}} Y_{i+1}\oslash(p_{i+1}^{\circ -\frac{1}{2}}+\epsilon)\right)$\label{algo2_5}\\
        $X_{i+1}=X_\dagger(X_\dagger^\top X_\dagger)^{-\frac{1}{2}}$.\label{algo_SOn_adam_step_dagger}
    }
    \Return{$X_N$}
    \caption{Adam on $\mathsf{SO}(n)$}
    \label{algo_orth_Adam}
\end{algorithm}

\section{Experimental details}
Note: codes are provided in supplementary materials.

Experiments are conducted on a high-performance computing cluster whose name shall be revealed post anonymous period. Single v100 GPU was used. 
\label{sec:ExperimentalDetails}
\subsection{General discussion}
\label{app_experiment_discussion}
\paragraph{Initialization} For variable $X$ with the initial $X_0$, in fact, any initialization with full rank is valid, including the non-orthogonal $X_0$ such that $X_0$ does not start on the manifold. This is due to the structure-preserving property of our algorithm (both SGD and Adam versions, see Apdx. \ref{app_discussions_numerical}). After just one iteration, $X$ will stay on the manifold. Here we suggest one way of obtaining orthogonal $X_0$ which is performed in~\cite{saxe2013exact}. After randomly generating an entry-wise i.i.d. normal distributed $n$-by-$m$ matrix, we can perform the QR decomposition. Denote the orthogonal matrix from QR decomposition as $\tilde{Q}$ and denote the diagonal matrix which is the sign of the diagonal elements of $R$ as $\tilde{R}$. Then we can take $X_0=\tilde{Q}\tilde{R}$ and thus have $X_0^\top X_0=I$.

For the variables $U$, $Z$ in Stiefel SGD and $U$, $Z$, $p$, $q$ in Stiefel Adam, we would use zero matrices as their initialization since zero matrices always satisfy these constraints~\eqref{eqn_XYV_constraint}.

\paragraph{No need to tune learning rate separately for constrained and unconstrained parameters} 
For problems with both constrained and unconstrained parameters (Stiefel and Euclidean spaces), for example, the ViT test in Sec.~\ref{sec_experiment}, existing literatures using projection, retraction, and intermediate matrices (see Sec.~\ref{sec_related_work}) requires separate adjustments of learning rates for the two types of parameters. As an illustration, in projected SGD and Adam on Stiefel manifold \cite{li2020efficient}, learning rate may vary 10s of times for constrained and unconstrained parameters. This can be intuitively understood as the result of applying different optimization methods for different parameters. However, using various learning rates may lead to the divergence of the algorithms.

In contrast, our Stiefel method can be established for both the constrained and unconstrained parameters with the same learning rate. Due to the same derivation of numerical methods for these two types, the effort in tuning hyperparameters is reduced while there is still good performance. In practice, the learning rate need to be adjusted in Stiefel SGD and momentum can be chosen as 0.9 for most machine learning tasks. For Stiefel Adam, we recommend to use $10^{-3}$ to be the learning rate and $(\beta_1, \beta_2)= (0.9,0.999)$.

\paragraph{How to choose between Adam-Stiefel and momentum SGD-Stiefel} 
    Generally, we recommend choosing momentum SGD-Stiefel for traditional optimization and small scale neural networks that hyperparameters can be carefully adjusted while for large scale neural networks, use Adam-Stiefel as default.

\subsection{Details of PRW}
\label{app_experiment_PRW}

For MNIST experiment, we use a pretrained CNN to extract 128-dim. features of figures in MNIST. For Shakespeare's plays experiment, we compute the PRW distances between all pairs of items in a corpus of eight Shakespeare's operas embedded into 200-dim. space using word2vec. For each digit or play pair, we use the extracted features as point sets $\{x_i\}$ and $\{y_i\}$. We choose the dimension of the target space of the projection to be $k=2$ in both experiments. The mean optimal transport values are taken among all digits or movie pairs at the specific iteration. All the setting are same as the original paper \cite{lin2020projection} except that early termination is removed.\footnote{Adam-type methods are not included since they are not suitable for this problem.} The hyperparameters for each optimizer are listed in Tab. \ref{tab_prw_hyparams}.

\begin{table}
\centering
\begin{tabular}{|p{4.5cm}|p{4cm}|p{4cm}|  }
 \hline
Method& MNIST & Shakespear\\
 \hline
 Original (RGAS) & $\eta=8e-4$ & $\eta=2$\\
 Original (RAGAS) & $\eta=0.01$, $\beta=0.8$ & $\eta=0.08$, $\beta=0.9$\\
Stiefel SGD (ours)) & $\eta=1e-3$, $\mu=0.5$ & $\eta=2$, $\mu=0.5$\\
Projected Stiefel SGD\cite{li2020efficient} & $\eta=7e-4$, $\mu=0.5$ & $\eta=15$, $\mu=0.5$\\
Momentumless Stiefel SGD \cite{wen2013feasible} & $\eta=6e-4$ & $\eta=2$\\
\hline
\end{tabular}

 \caption{Details for hyperparameters of optimizers in PRW test. $\eta$ stands for learning rate, $\mu$ stands for momentum parameter and $\beta$ stands for the parameter that adjusts stepsize automatically.}
\label{tab_prw_hyparams}
\end{table}

\subsection{Details of ViT experiments}
\label{app_experiment_ViT}

\begin{figure}[ht]
	\includegraphics[width=\textwidth]{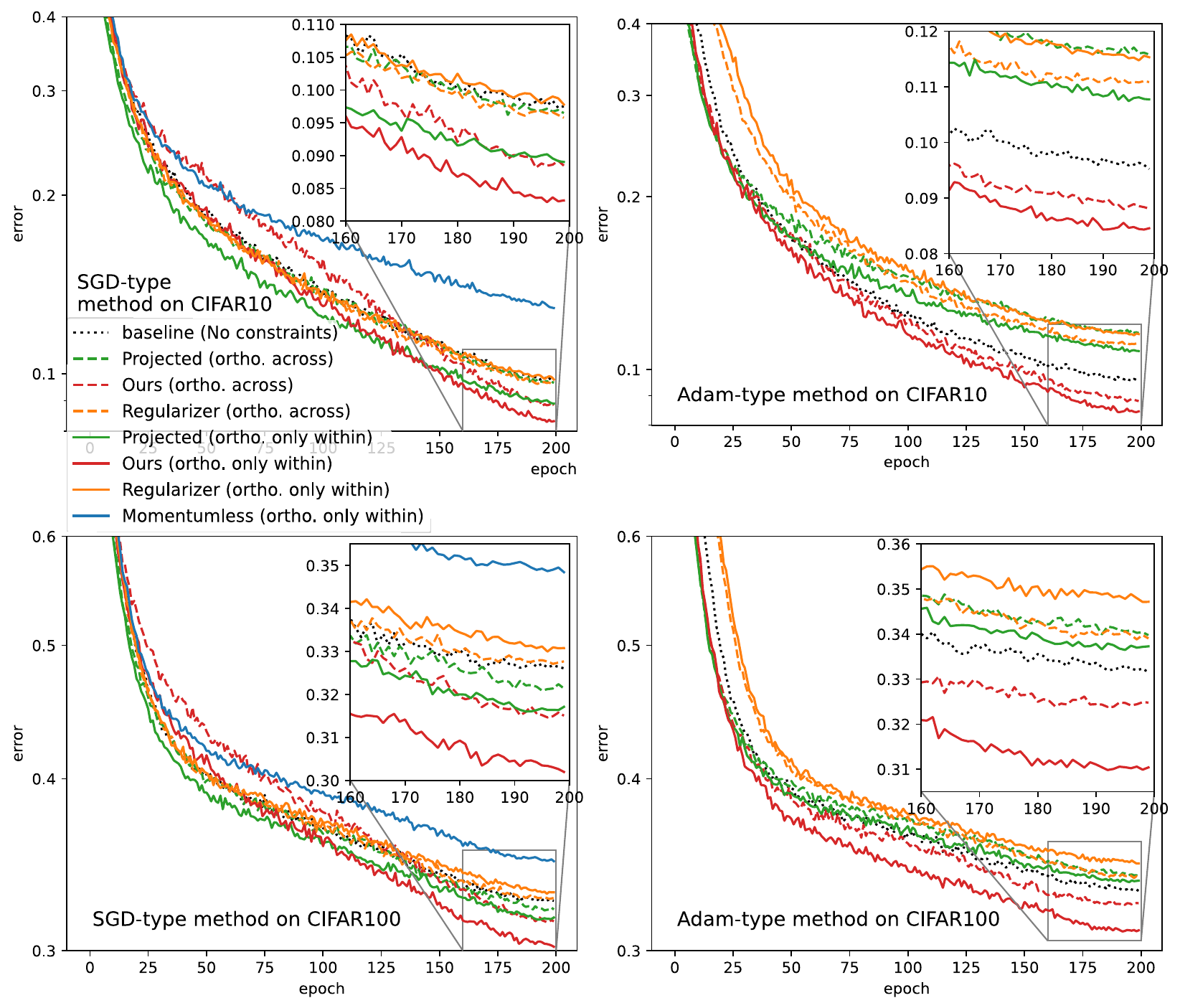}
	\caption{Test errors of Stiefel-ViT on CIFAR. We can see that our optimizer has the best test accuracy in both cases. Meanwhile, when our optimizer is used, orthogonality only within each head performs better than orthogonality across heads.}
	\label{fig_ViT}
\end{figure}

The structure of ViT and training process is exactly same as \url{https://github.com/omihub777/ViT-CIFAR} but we use a different ViT implementation from \url{https://github.com/lucidrains/vit-pytorch}. The hyperparameters are carefully adjusted for each optimizer to obtain the best performance. For projected Stiefel SGD/Adam \cite{li2020efficient}, we tune the learning rates for constrained and unconstrained parameters separately. For regularizer SGD/Adam, the regularizer scaling parameter is also carefully chosen. The largest learning rate for momentumless Stiefel SGD \cite{wen2013feasible} is applied but momentum methods still admit faster convergence. Additional experimental results can be seen in Fig.~\ref{fig_ViT}.

\paragraph{Model structure} The images are cut into $4\times 4$ patches. $d_{model}=d_{feed forward}=384$, $n_{head}=12$. $d_q=d_k=d_v=32$ (which also equals $d_{model}/n_{head}$, so `orthogonality across heads' constraint can be applied; see Section~\ref{sec_ViT}). The `classification token' in \cite{dosovitskiy2020image} is used. Total number of layer of our ViT model is 7.

\paragraph{Data augmentation} Following \url{https://github.com/omihub777/ViT-CIFAR}, we use the implementation of \citep{cubuk2018autoaugment} from \url{https://github.com/DeepVoltaire/AutoAugment} .

\paragraph{Training} All the training uses the same scheduler, which is to let the learning rate linearly increase from 0 to the target learning rate in 5 epochs, followed by cosine annealing learning rate \cite{loshchilov2016sgdr} with no restart, and minimum learning rate as 0.01 times max learning rate. Label smoothing with parameter 0.1 is used. weight decay $=5e-5$ unless specified.

All the hyperparameters used for optimizers are listed in Tab. \ref{tab_vit_hyparams}.
\begin{table}
\centering
\begin{tabular}{|p{4.5cm}|p{9cm}|  }
 \hline
Method& Hyperparameters\\
 \hline
Stiefel SGD (Algo. \ref{algo_St_SGD}))& $\eta_{orth}=\eta_{non-orth}=0.15$, $\mu=0.9$\\
Projected Stiefel SGD\citep{li2020efficient}& $\eta_{orth}=1$, $\eta_{non-orth}=0.1$, $\mu=0.9$\\
Regularizer SGD & $\eta=0.1$, $\mu=0.9$, penalty weight = 1e-6\\
Momentumless Stiefel SGD \citep{wen2013feasible} &$\eta_{orth}=0.1$. For non-orth parameters, $\eta_{non-orth}=0.1$, $\mu=0.9$\\
Stiefel Adam (Algo. \ref{algo_St_Adam})&$\eta_{orth}=\eta_{non-orth}=0.001$, $\beta_1=0.9$, $\beta_2=0.999$\\
Projected Stiefel Adam\citep{li2020efficient}&$\eta_{orth}=0.001$, $\eta_{non-orth}=0.01$, $\beta_1=0.9$, $\beta_2=0.999$\\
Regularizer Adam &$\eta=0.001$, $\beta_1=0.9$, $\beta_2=0.999$, penalty weight = 1e-6\\
SGD &$\eta_{orth}=0.1$, $\mu=0.9$\\
Adam \citep{kingma2015adam} &$\eta=0.001$, $\beta_1=0.9$, $\beta_2=0.999$\\
AdamW \citep{loshchilov2017decoupled} &$\eta=0.001$, $\beta_1=0.9$, $\beta_2=0.999$, weight decay $=0.2$\\
\hline
\end{tabular}

 \caption{Details for hyperparameters of optimizers in ViT training. $\eta$ stands for learning rate, $\mu$ stands for momentum parameter and $\beta_1$, $\beta_2$ are parameters in Adam.}
\label{tab_vit_hyparams}
\end{table}

\section{Additional experiment: systematic tests on the leading eigenvalue problem}
\label{app_sec_GEV}

In this section, we consider the leading eigenvalue problem: given an $n$-by-$n$ matrix $A$, find the $m$ largest eigenvalues of $A$. This problem is at the core of many data sciences tasks, where $n$ can be very large but $m$ remains small. Due to its relative simplicity and the existence of exact solution, it's possible to implement a large amount of experiments in order to systematically investigate the convergence, manifold perseverance, and time consumption of various approaches. Particularly, this problem can be formulated as an optimization problem on $\mathsf{St}(n,m)$ as follows

\begin{equation}
\arg\max_{X\in \mathsf{St}(n,m)} \tr(X^\top A X).
\end{equation}

The idea is one seeks an optimal $m$-dimensional subspace, represented by $X$ via $m$ orthonormal bases in $\mathbb{R}^n$ corresponding to its $m$ columns, so that eigenvalues projected onto this subspace sums up to a maximum value.

Such formulation is not unique. For example, \cite{tao2020variational} consider applying their momentum-accelerated $\mathsf{SO}(n)$ optimizer to $
\arg\min_{R\in \mathsf{SO}(n)} \tr(E^\top R^\top A R E)
$, where the constant padding matrix $E:=[I_{m\times m}; 0_{(n-m)\times m}]$. However, their setting is computationally less efficient than our Stiefel simplification. In fact, our formulation will be particularly suitable to cases when $m$ is small but $n$ is very large.

Our test uses a deterministic matrix $A$ generated by $A=(\Xi+\Xi^\top)/2/\sqrt{n}$, where $\Xi$ is an instance of $n$-by-$n$ matrix with i.i.d. random normal elements. 
The exact solution is obtainable from matrix decomposition so that the error can be quantified. As is mentioned above, several aspects of our algorithm are examined, including convergence speed, manifold perseverance, and computational complexity in Fig. \ref{fig_LEV}, as well as other exact manifold-preserving methods. We can conclude that our (S)GD-Stiefel method has the fastest convergence rate and exact manifold perseverance with the lowest computational complexity.

Additionally, in Fig.~\ref{fig_metric_inner_prod}, we test the influence of different canonical-type metric, i.e., different $a$ in Def. \ref{def_metric}, and the three different ways to `compute' matrix exponential in Apdx. \ref{app_simplify_matrix_exponential}, i.e., the matrix exponential itself, Cayley map, and forward Euler. No significant difference is found in leading eigenvalue test and as a result, the most commonly used canonical metric and the cheapest forward Euler are chosen as default in all the other experiments. The hyperparameter used for each algorithm are listed in Tab. \ref{tab_lev_hyparams}
\begin{table}
\centering
\begin{tabular}{|p{4.5cm}|p{6cm}| }
 \hline
Method& Hyperparameters\\
 \hline
Stiefel SGD (ours) & $\eta=0.1$, $\mu=0.9$\\
Stiefel Adam (ours) & $\eta=0.001$, $\beta_1=0.9$, $\beta_2=0.999$\\
Projected Stiefel SGD\cite{li2020efficient} & $\eta=0.2$, $\mu=0.9$\\
Momentumless Stiefel SGD \cite{wen2013feasible} & $\eta=0.1$\\
\hline
\end{tabular}

 \caption{Details for hyperparameters of optimizers in leading eigenvalue problem. $\eta$ stands for learning rate, $\mu$ stands for momentum parameter and $\beta_1$ and $\beta_2$ stands for the hyperparameters in Adam.}
\label{tab_lev_hyparams}
\end{table}

\begin{figure}[ht]
\begin{minipage}{\textwidth}
	\includegraphics[width=\textwidth]{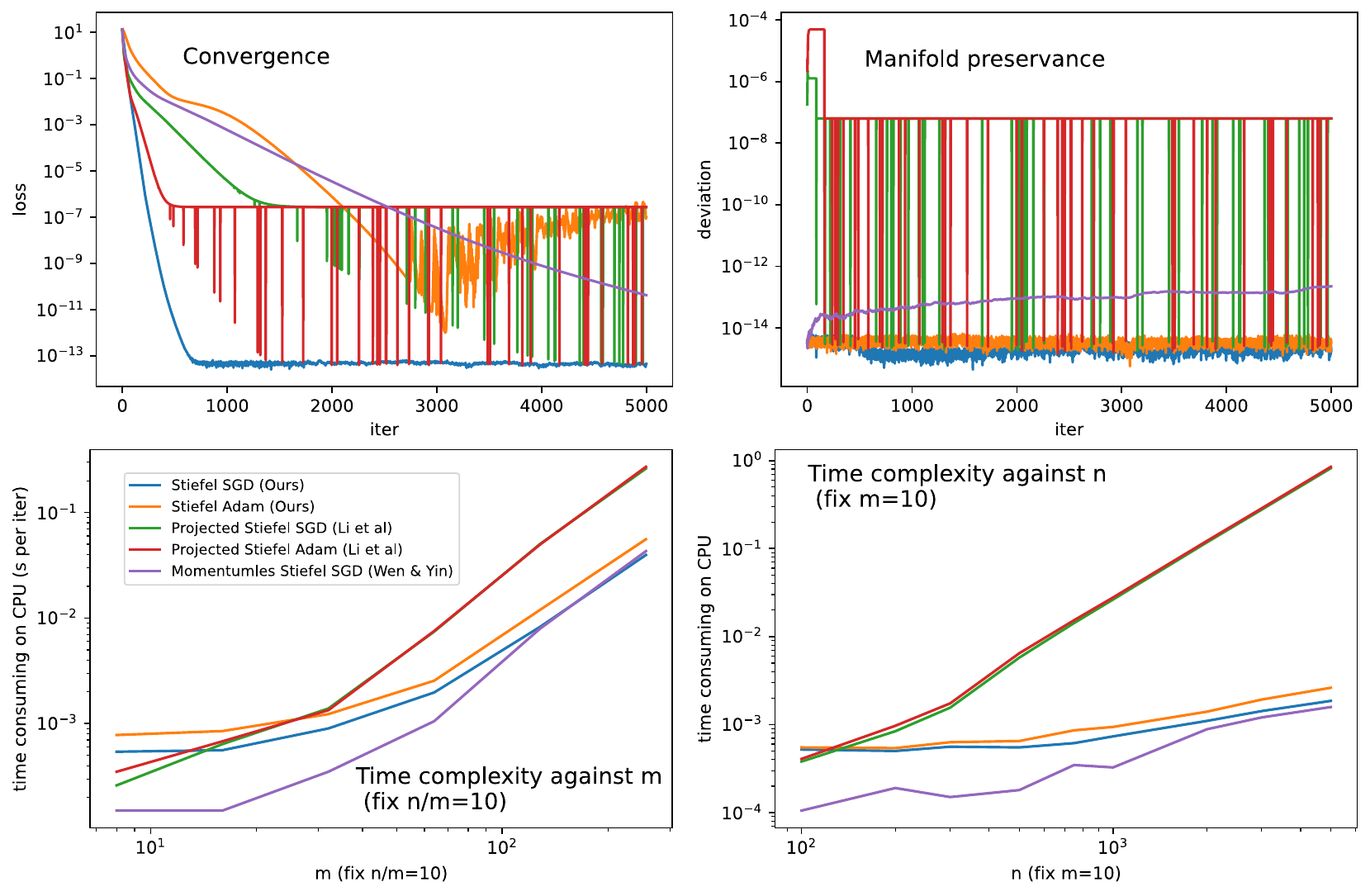}
    \caption[Caption for LOF]{Comparison of exact manifold preserving methods on leading eigenvalue problem. The algorithms are performed on CPU with learning rate of each one adjusted to its best performance. The upper two figures show that our Stiefel SGD optimizer has the fastest convergence and best manifold perseverance. The lower two figures experimentally prove that we have the lowest $\mathcal{O}(nm^2)$ complexity per iteration, which matches our complexity analysis in Apdx. \ref{sec_complexity} and table \ref{tab_comparison}. Notice that our algorithm and \texttt{Momentumless Stiefel SGD} have almost coinciding curves when $m\to\infty$ in (c) and when $n\to\infty$ in (d), while both algorithms have much lower curves than $\texttt{Projected Steifel SGD}$ when $m\to\infty$. This means that our algorithm has a similar constant factor as the complexity of \texttt{Momentumless Stiefel (S)GD} \citep{wen2013feasible}, indicating that the introduction of momentum in our algorithm is almost `free', and meanwhile, \texttt{Projected Steifel SGD} \citep{li2020efficient} has a much larger prefactor. Please see Apdx. \ref{sec_complexity} for more discussions.
    \protect\footnotemark}
    \label{fig_LEV}
    \end{minipage}
\end{figure}
\footnotetext{The time recorded excludes the corresponding part of computing gradients.

Both the algorithm in paper \cite{li2020efficient} and their code has $\mathcal{O}(n^2 m)$ computational complexity per iteration. However, this can be improved by changing the order of computing matrix production by associative law. Also, this $\mathcal{O}(n^2 m)$ is the complexity under a fixed number of iterations for approximating Cayley map according to their setup. It will be changed to $\mathcal{O}(nm^2 (1+\log(1/\textbf{u})))$ if Cayley map is computed to machine precision.}

\begin{figure}[ht]
	\centering
	\subfigure[\footnotesize{SGD}]{\includegraphics[trim={0.5cm 0.5cm 0.8cm 0.8cm},width=0.45\linewidth]{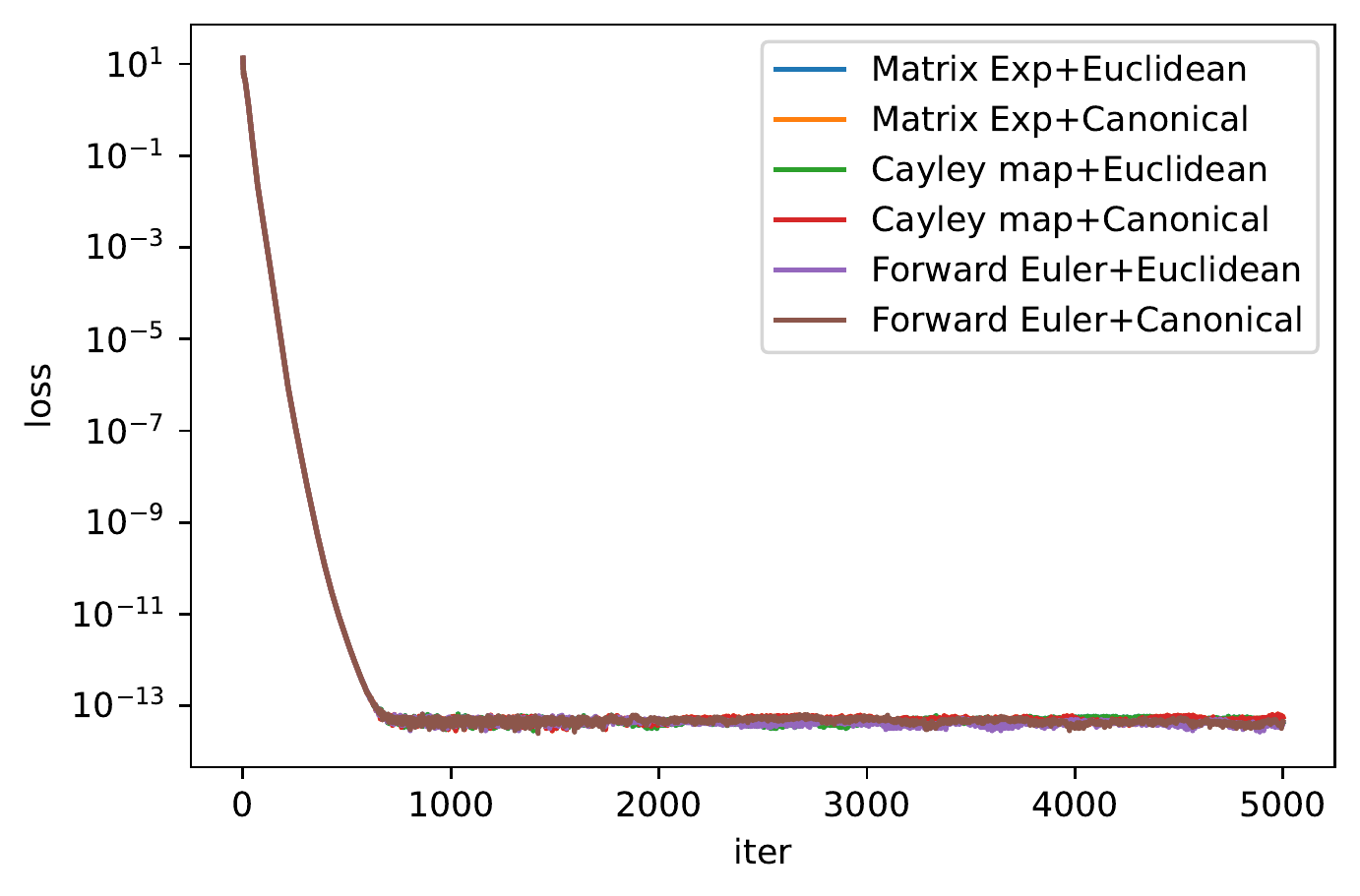}}
	\hfill
    \subfigure[\footnotesize{Adam}]{\includegraphics[trim={0.5cm 0.5cm 0.8cm 0.8cm},width=0.45\linewidth]{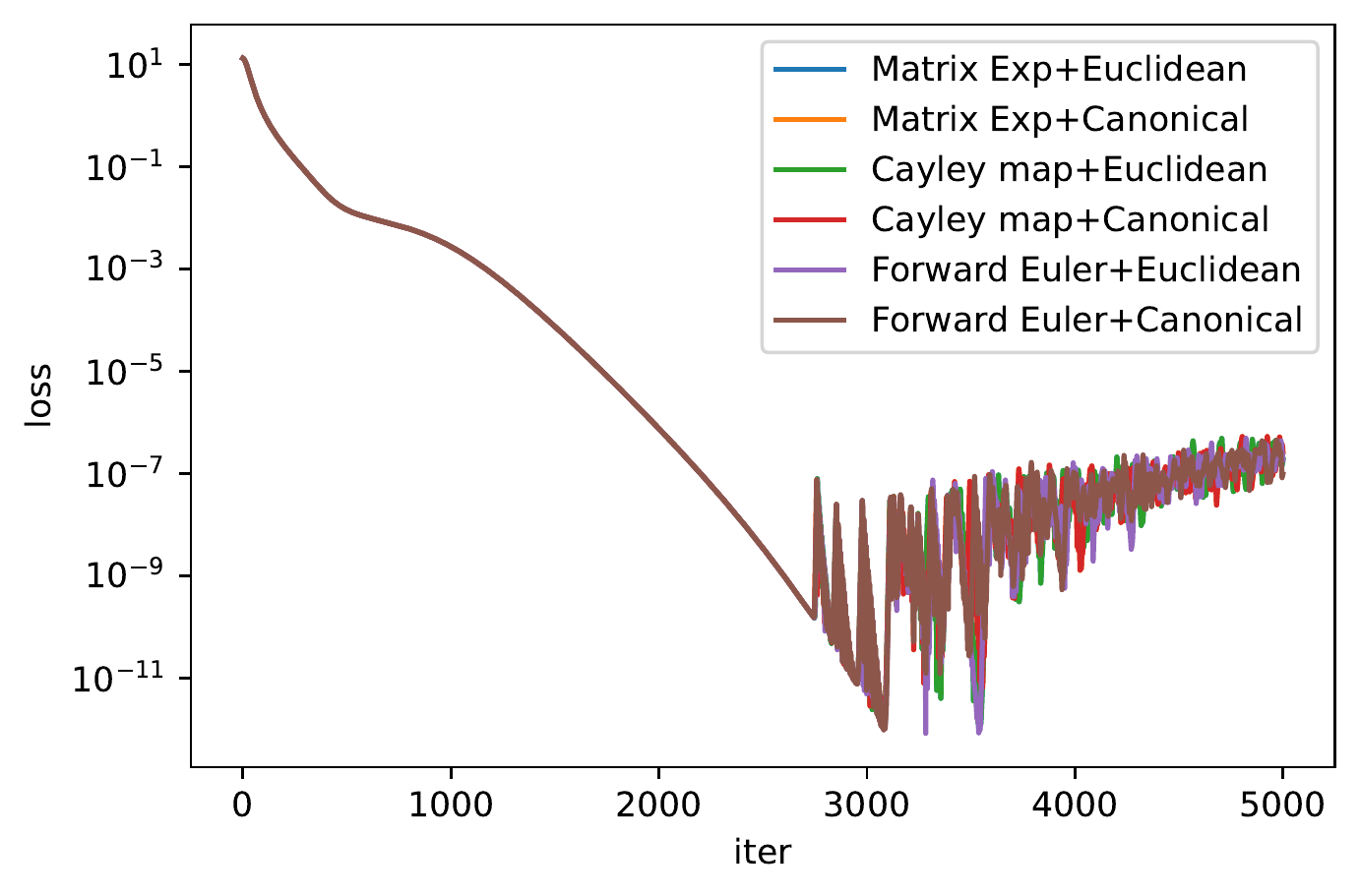}}
    \caption{Comparision of different canonical-type metrics (Euclidean and canonical metrics are tested) and different ways to approximate matrix exponential in Apdx. \ref{app_simplify_matrix_exponential}.}
    \label{fig_metric_inner_prod}
\end{figure}

We also experimentally prove that better manifold preservation leads to better convergence, for both our algorithm \ref{algo_St_SGD} and Projected Stiefel SGD \citep{li2020efficient} in Fig \ref{fig_manifold_matters}. What's more, this experiment shows our (inner) iterative solver is quadratically convergent (as opposed to the linearly convergent Cayley approximation in \cite{li2020efficient}), and therefore a smaller number of iteration is needed to reach machine precision.

\begin{figure}[ht]
	\centering
	\subfigure[\footnotesize{Projected Stiefel SGD \cite{li2020efficient} using different numbers of Cayley loops}]{\includegraphics[trim={0.5cm 0.5cm 0.8cm 0.8cm},width=0.3\linewidth]{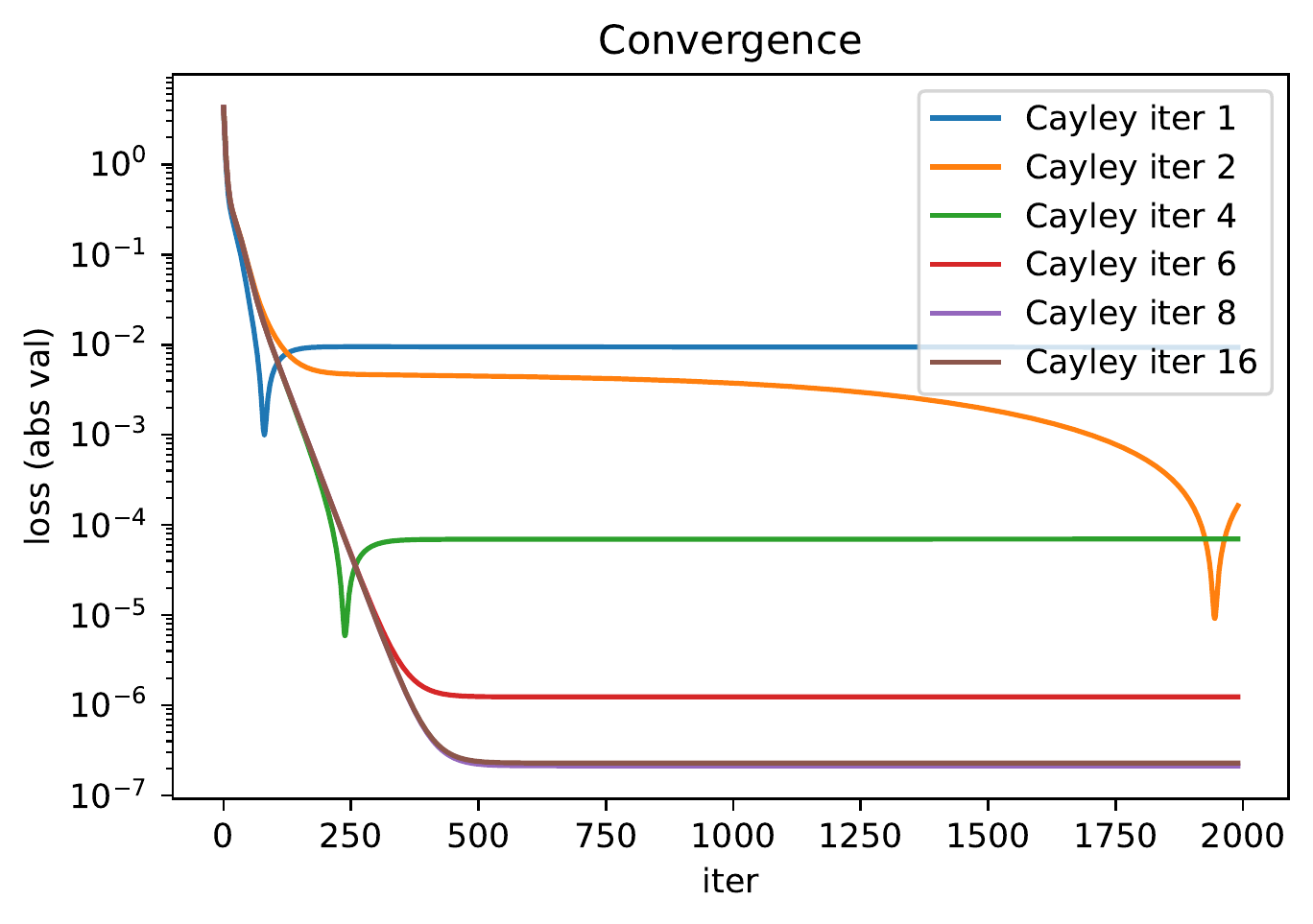}}
	\hfill
    \subfigure[\footnotesize{Projected Stiefel SGD \cite{li2020efficient} using QR retractions in different frequencies}]{\includegraphics[trim={0.5cm 0.5cm 0.8cm 0.8cm},width=0.3\linewidth]{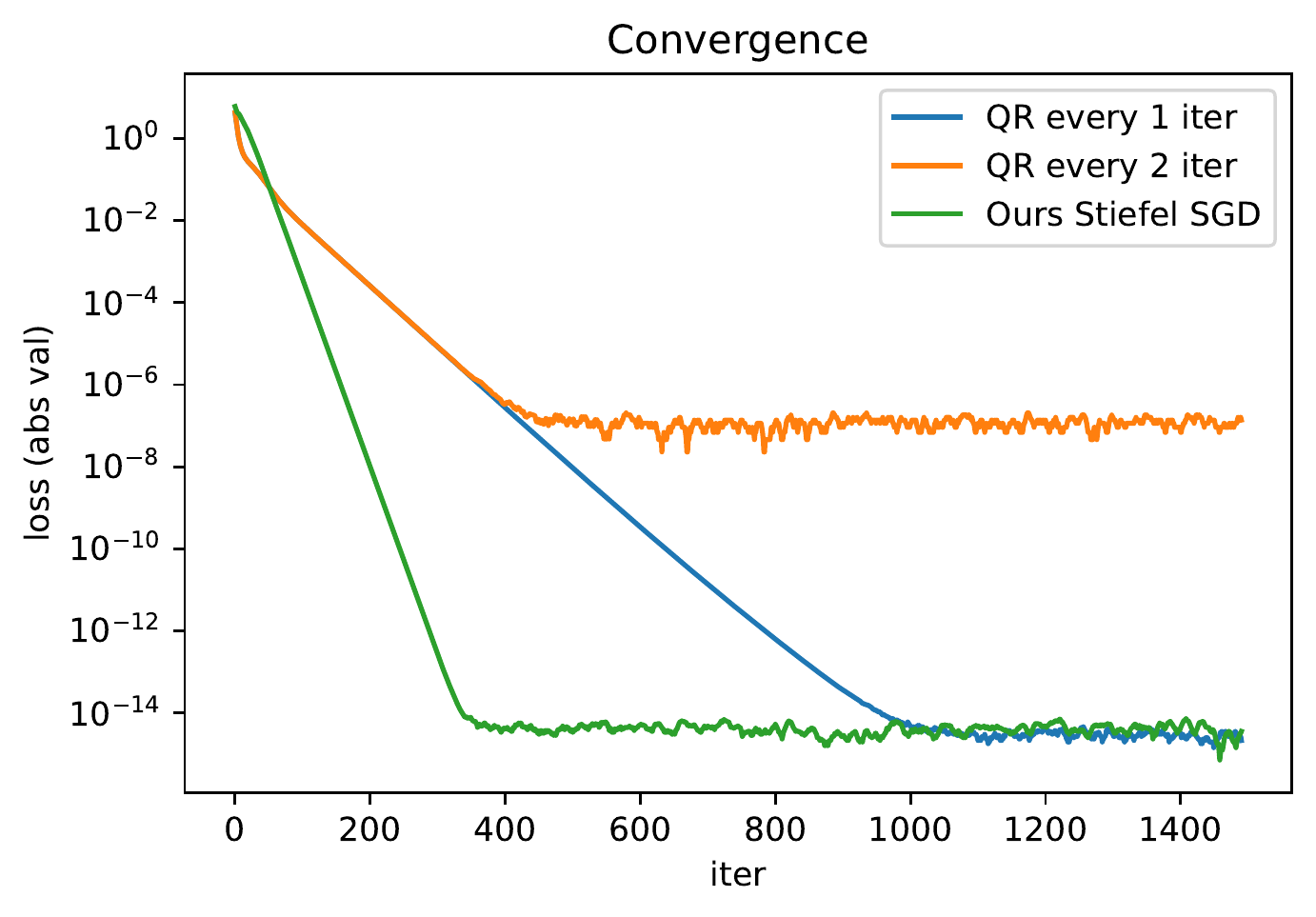}}
	\hfill
    \subfigure[\footnotesize{Algo. \ref{algo_orth_SGD} using different number of inner loops for matrix root inversion}]{\includegraphics[trim={0.5cm 0.5cm 0.8cm 0.8cm},width=0.3\linewidth]{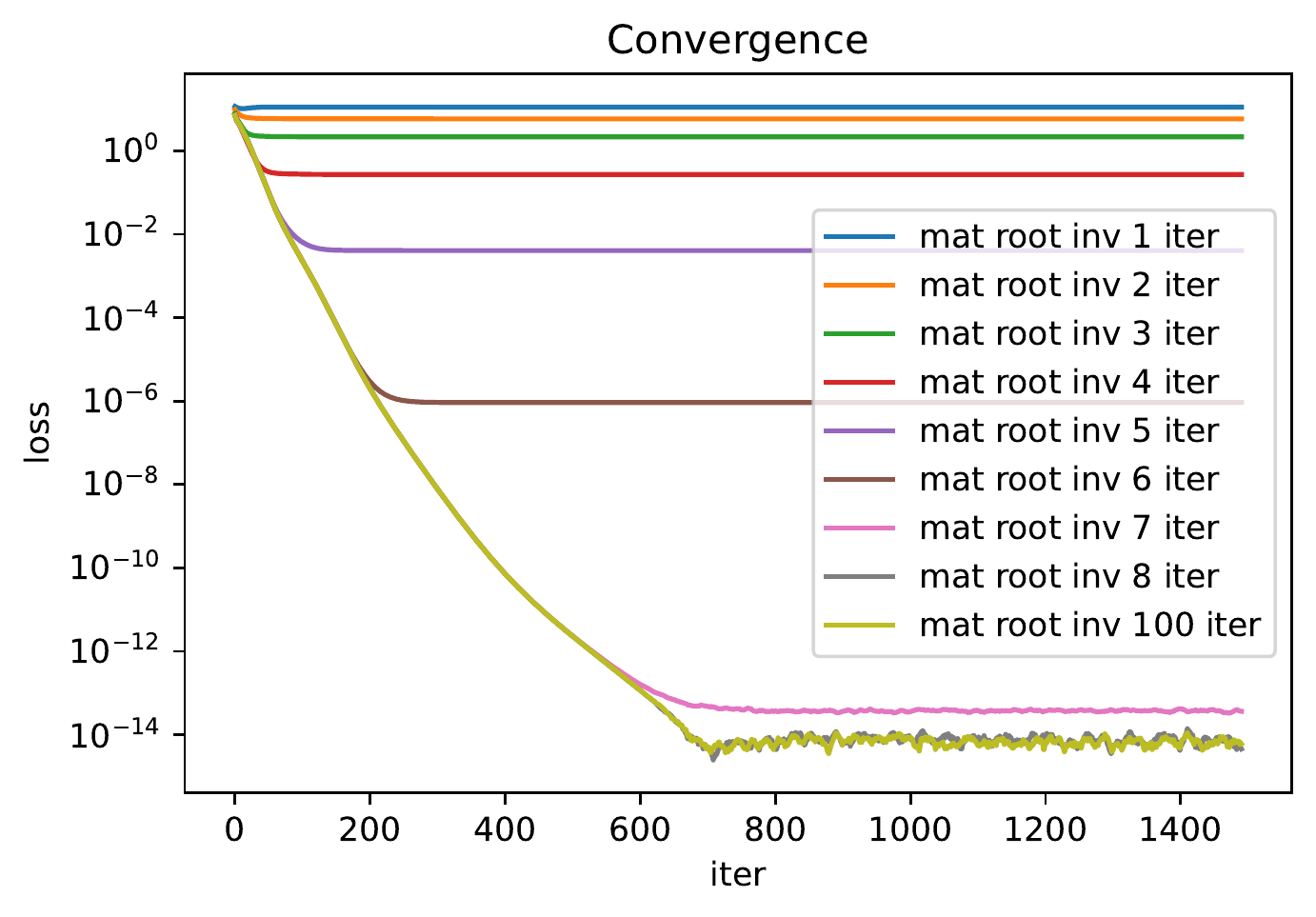}}
    
    \caption{(a) shows that on Projected Stiefel SGD, performing more inner iterations that leads to a more accurate Cayley transform gives a better optimized function value; (b) shows that more frequently applying QR retractions, i.e., projecting the variable, provides better accuracy; (c) shows that more accurate matrix root inversion for polar retraction in our algorithm \ref{algo_orth_SGD} leads to more accurate solution. We can see performing 8 iterations (note this curve coincides with 100 iterations) of matrix root inversion gives perfect accuracy for our method, compared to (a) where 16 iterations of Cayley transform only give an error $\sim 10^6$ times bigger.}
    \label{fig_manifold_matters}
\end{figure}

Though our special retraction on tangent bundle (polar retraction for position and no extra handling for momentum) is cheap, it must be used with our specially designed algorithm and cannot be applied to other algorithms even if the position will still stay on Stiefel manifold. The reason is that the loss of structure of momentum will lead to slow convergence. Fig \ref{fig_Li_OurRetraction} shows that projected Stiefel SGD with our retraction for tangent bundle convergences slower than not only our algorithm but also the original projected Stiefel SGD.

\begin{figure}[ht]
	\centering
	\includegraphics[width=0.8\textwidth]{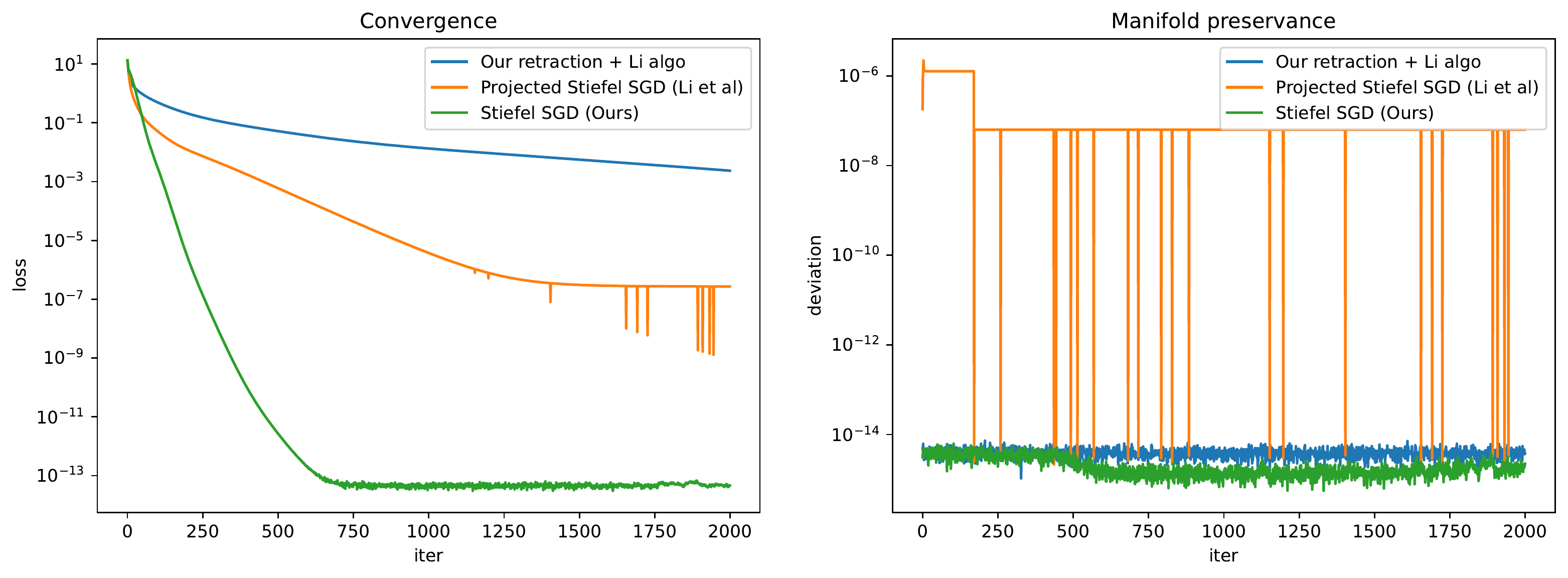}
    \caption{Convergence and manifold preservation when our low-cost retraction (polar retraction for position and no extra handling for momentum) is applied to projected Stiefel SGD \citep{li2020efficient}. Though our design also helps the projected Stiefel SGD to preserve the manifold structure with only machine precision error, the convergence to minimum is slower than both our algorithm and original projected Stiefel SGD.}
    \label{fig_Li_OurRetraction}

\end{figure}
\end{document}